\definecolor{ferngreen}{rgb}{0.31, 0.47, 0.26}
\definecolor{denim}{rgb}{0.08, 0.38, 0.74}
\definecolor{deepsaffron}{rgb}{1.0, 0.6, 0.2}
\definecolor{llgrey}{rgb}{0.95, 0.95, 0.95}
\definecolor{llyellow}{rgb}{0.98, 0.98, 0.75}
\definecolor{llred}{rgb}{0.97, 0.85, 0.85}
\newtheorem{lemma}{Lemma}
\newtheorem{proposition}{Proposition}
\newtheorem{theorem}{Theorem}
\setlist[description]{leftmargin=*}
\newcommand{\Pa}[1]{\text{pa}(#1)} %
\newcommand{\Do}[1]{\text{do}(#1)} %
\newcommand{\E}{\mathbb{E}}
\def\rva{{\mathbf{a}}}
\def\rvb{{\mathbf{b}}}
\def\rvc{{\mathbf{c}}}
\def\rvx{{\mathbf{x}}}
\def\rvz{{\mathbf{z}}}
\def\rmX{{\mathbf{X}}}
\newtheorem{assumption}{Assumption}
\definecolor{jazzberryjam}{rgb}{0.65, 0.04, 0.37}
\newcommand\notears{\textit{Notears}}
\newcommand\notearsmlp{\textit{Notears-MLP}}
\newcommand\grandag{\textit{Grandag}}
\newcommand\carefl{\textit{Carefl}}
\newcommand\notearssob{\textit{Notears-Sob}}
\newcommand\golem{\textit{Golem}}
\newcommand\wt{W(\theta)}
\title{Deep End-to-end Causal Inference}
\newcommand{\spacer}{\ \ }
\author{%
  Tomas Geffner\textsuperscript{\normalfont $\dagger$ 1 }\thanks{Equal contribution. $\dagger$ Contributed during internship or residency in Microsoft Research. } \spacer Javier Antoran\textsuperscript{\normalfont $\dagger$ 2 *} \spacer   Adam Foster\textsuperscript{\normalfont 3 *} \spacer     Wenbo Gong\textsuperscript{\normalfont 3} \spacer  Chao Ma\textsuperscript{\normalfont 3} \\ \textbf{Emre Kiciman}\textsuperscript{\normalfont 3} \spacer \textbf{Amit Sharma}\textsuperscript{\normalfont 3} \spacer \textbf{Angus Lamb}\textsuperscript{\normalfont $\dagger$ 4} \spacer \textbf{Martin Kukla}\textsuperscript{\normalfont 3} \spacer  \\ \textbf{Nick Pawlowski}\textsuperscript{\normalfont 3} \spacer \textbf{Miltiadis Allamanis}\textsuperscript{\normalfont 3} \spacer \textbf{Cheng Zhang}\textsuperscript{\normalfont 3} \\
  \textsuperscript{1} University of Massachusetts Amherst \qquad
  \textsuperscript{2} University of Cambridge \\
  \textsuperscript{3} Microsoft Research \qquad
  \textsuperscript{4} G-Research \\
  \texttt{cheng.zhang@microsoft.com}
}
\begin{document}

\maketitle

\begin{abstract}
Causal inference is essential for data-driven decision making across domains such as business engagement, medical treatment and policy making.  However, research on causal discovery has evolved separately from inference methods, preventing straight-forward combination of methods from both fields.  In this work, we develop Deep End-to-end Causal Inference (DECI), a single flow-based non-linear additive noise model that takes in observational data and can perform both causal discovery and inference, including conditional average treatment effect (CATE) estimation. We provide a theoretical guarantee that DECI can recover the ground truth causal graph under standard causal discovery assumptions. Motivated by application impact, we extend this model to heterogeneous, mixed-type data with missing values, allowing for both continuous and discrete treatment decisions. Our results show the competitive performance of DECI when compared to relevant baselines for both causal discovery and (C)ATE estimation in over a thousand experiments on both synthetic datasets and causal machine learning benchmarks across data-types and levels of missingness. 
\end{abstract}

\section{Introduction}

Causal-aware decision making is pivotal in many fields
such as economics \citep{battocchi2021estimating, zhang2006extensions} and healthcare \citep{bica2019estimating, huang2021diagnosis, tu2019causal}. 
For example, in healthcare, caregivers may wish to understand the effectiveness of different treatments given only historical data. They aspire to estimate treatment effects from observational data, with incomplete or no knowledge of the causal relationships between variables.
This is the 
\emph{end-to-end causal inference} problem, displayed in \Cref{fig:overview}, where we discover the causal graph and estimate treatment effects together using weaker causal assumptions and observational data.  %

\begin{figure*}[t]
\centering
  \vspace{-0.5cm}
\resizebox{.95\textwidth}{!}{
\input{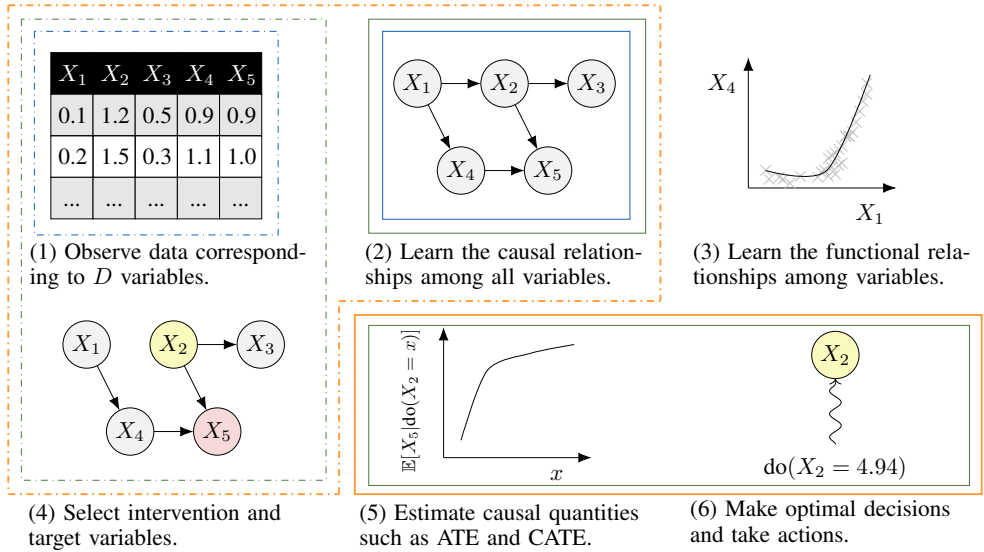}
}
\vspace{-2pt}
    \caption{An overview of the {\color{ferngreen}deep end-to-end causal inference} pipeline compared to traditional {\color{denim}causal discovery} and {\color{deepsaffron}causal inference}. The dashed line boxes show the inputs and the solid line boxes show the outputs. In causal discovery, a user provides observational data (1) as input. The output is the causal relationship (2) which are DAGs or partial DAGs. In causal inference, the user needs to provide both the data (1) and the causal graph (2) as input and provide a causal question by specifying treatment and effect (4), a model is learned and outputs the causal quantities (5) which helps decision making (6). In this work, we aim to answer causal questions end-to-end. DECI allows the user to provide the observational data only and specify any causal questions %
    and output both the discovered causal relationship (2) and the causal quantities (5) that helps decision making (6).   }
    \vspace{-15pt}
    \label{fig:overview}
\end{figure*}

It is well known that any causal conclusion drawn from observational data requires assumptions that are not testable in the observational environment \citep{pearl2009causal}.
Existing methods for estimating causal quantities from data, which we refer to as \emph{causal inference methods}, commonly assume complete \emph{a priori} knowledge of the 
causal graph. This is rarely available in real-world applications, especially when many variables are involved.  On the other hand, existing \emph{causal graph discovery methods}, i.e. those that seek to infer the causal graph from observational data,  require assumptions about statistical properties of the data, which often require less human input
\citep{spirtes1991algorithm}. These methods
often return a large set of plausible graphs, as shown in \Cref{fig:overview}.  
This incompatibility of assumptions and inputs/outputs makes the task of answering causal queries in an \emph{end-to-end manner} non-trivial.

We tackle the problem of end-to-end causal inference (ECI) in a non-linear additive noise structural equation model (SEM) with no latent confounders.
Our framework aims to allow practitioners to estimate causal quantities using only observational data as input.
Our contributions are:
\begin{itemize}[leftmargin=*]
  \setlength\itemsep{2pt}
    \item \textit{A deep learning-based end-to-end causal inference framework named DECI, which performs both causal discovery and inference.} 
    DECI is an autoregressive-flow based non-linear additive noise SEM capable of learning complex nonlinear relationships between variables and non-Gaussian exogenous noise distributions. 
    DECI uses variational inference to learn a posterior distribution over causal graphs. Additionally, we show how the functions learnt by DECI can later be used for simulation-based estimation of (C)ATE. 
    DECI is trained once on observational data; different causal quantities can then be efficiently extracted from the fitted structural equation model.
    
    \item \textit{Theoretical analysis of DECI.}
    We show that, under correct model specification, DECI asymptotically recovers the true causal graph and data generating process.
    Furthermore, we show that DECI generalizes a number of causal discovery methods, such as \notears\ \citep{zheng2018dags, zheng2020learning}, \grandag\ \citep{lachapelle2019gradient}, and others\,\citep{ng2019graph, ng2020role}, providing a unified view of functional causal discovery methods.
    
    \item \textit{Extending DECI for applicability to real data.}
     To make DECI applicable to real-data, we implement support for mixed type (continuous and categorical) variables and missing value imputation.
    
    \item \textit{Insights into ECI performance with more than 1000 experiments.} 
    We systematically evaluate DECI, along with a range of combinations of existing discovery and inference algorithms. DECI performs very competitively with baselines from both the causal discovery and inference domains.
\end{itemize}

\section{Related Work and Preliminaries}

\textbf{Related Work.} Our work relates to both causal discovery and causal inference research.  Approaches for causal discovery from observational data can be classified into three groups: constraint-based, score-based, and functional causal models \citep{glymour2019review}.
Recently, \citet{zheng2018dags} framed the directed acyclic graph (DAG) structure learning problem as a continuous optimisation task. Extensions \citep{lachapelle2019gradient,zheng2020learning} employ nonlinear function approximators, like neural networks, to model the relationships among connected variables. 
Our work combines this class of approaches with standard causal assumptions \citep{ng2019graph} to obtain our main theorem about causal graph learning. We extend functional methods to handle mixed data types and missing values.
Outside of functional causal discovery, functional relationships between variables (see \Cref{fig:overview}(3)) are typically not learned by discovery algorithms \citep{spirtes1991algorithm}. Thus, distinct models, with potentially incompatible assumptions or inputs, must be relied upon for causal inference. However, when a DAG cannot be fully identified given the available data, constraint and score-based methods often return partially directed acyclic graphs (PAGs) or completed partially directed acyclic graphs (CPDAGs) \citep{spirtes2000causation}. Instead of returning a summary graph representing a set, DECI returns a distribution over DAGs in such situation.%

Causal inference methods  
assume that either the graph structure is provided~\citep{pearl2009causal} or relevant structural assumptions are provided without the graph~\citep{imbens2015causal}. 
Causal inference can be decomposed into two steps: identification and estimation. Identification focuses on converting the causal estimand (e.g. $P(Y|\Do{X=x}, W)$) into an estimand that can be estimated using the observed data distribution (e.g. $P(Y|X, W)$). Common examples of identification methods include the back-door and front-door criteria~\citep{pearl2009causal}, and instrumental variables~\citep{angrist1996identification}.
Causal estimation computes the identified estimand using statistical methods, such as simple conditioning, inverse propensity weighting~\citep{li2018balancing}, or matching \citep{rosenbaum1983central, stuart2010matching}. Machine learning-based estimators for CATE have also been proposed~\citep{chernozhukov2018double,wager2018estimation}.
Recent efforts to weaken structural assumption requirements~\citep{guo2021minimal, pmlr-v139-jung21b} allow for PAGs and CPDAGs. Our work takes steps in this direction, allowing inference with distributions over graphs.

\textbf{Structural Equation Models (SEM).} Let $\rvx = (x_1,\dots,x_D)$ be a collection of random variables. SEMs \citep{pearl2009causal} model causal relationships between the individual variables $x_i$.
Given a DAG $G$
on nodes $\{1, \dots, D\}$, $\rvx$ can be described by
    $x_i = F_i\left(\rvx_{\text{pa}(i; G)}, z_i \right)$,
where $z_i$ is an exogenous noise variable that is independent of all other variables in the model, $\text{pa}(i; G)$ is the set of parents of node $i$ in $G$, and $F_i$ specifies how variable $x_i$ depends on its parents and the noise $z_i$. In this paper, we focus on additive noise SEMs, also referred to as additive noise models (ANM), i.e.
\begin{equation}
F_i\left(\rvx_{\text{pa}(i; G)}, z_i \right) = f_i\left(\rvx_{\text{pa}(i; G)}\right) + z_i \quad \text{or} \quad \rvx = f_G(\rvx) + \rvz \quad \text{in vector form}. \label{eq:ANM_vec_f}
\end{equation}

\textbf{Average Treatment Effects.} %
The ATE and CATE quantities allow us to  estimate the impact of our actions (treatments) \citep{pearl2009causal}.
Assume that $\rvx_T$ (with $T \subset \{1,\dots,D\}$) are the treatment variables;
the interventional distribution is denoted $p(\rvx\mid\text{do}(\rvx_T = \rva))$. 
The ATE and CATE on targets $\rvx_Y$ for treatment $\rvx_T{=}\rva$ given a reference $\rvx_T{=}\rvb$, and conditional on $\rvx_C{=}\rvc$ for CATE, are given by
\begin{gather}\label{eq:ATE}
    \text{ATE}(\rva,\rvb) = \E_{p(\rvx_Y\mid \text{do}(\rvx_T=\rva))}[\rvx_Y] - \E_{p(\rvx_Y\mid \text{do}(\rvx_T=\rvb))}[\rvx_Y],\quad \text{and}\\
    \text{CATE}(\rva,\rvb |\rvc) = \E_{p(\rvx_Y\mid \text{do}(\rvx_T=\rva), \rvx_C=\rvc)}[\rvx_Y] - \E_{p(\rvx_Y\mid \text{do}(\rvx_T=\rvb), \rvx_C=\rvc)}[\rvx_Y].
\end{gather}
We consider the common scenario where the \emph{conditioning variables are not caused by the treatment}.%

\section{DECI: Deep End-to-end Causal Inference}

We introduce DECI, an end-to-end deep learning-based causal inference framework.
DECI learns a distribution over causal graphs from observational data and (subsequently) estimates causal quantities.
\Cref{sec:deci_cd}, describes our autoregressive flow based ANM SEM.
\Cref{sec:deci_consistency} lays out the conditions under which DECI will recover the true causal graph given enough observational data (\Cref{thm: consistency of DECI}). \Cref{sec:deci_ate} shows how the generative model learnt by DECI can be used to simulate samples from intervened distributions, allowing for treatment effect estimation.
\Cref{sec:deciforthewin} extend's DECI's real-world applicability by adding support for non-Gaussian exogenous noise, mixed type data (continuous and discrete), and imputation for partially observed data.

\subsection{DECI and Causal Discovery} \label{sec:deci_cd}

DECI takes a Bayesian approach to causal discovery \citep{heckerman1999bayesian}. We model the causal graph $G$ jointly with the observations $\rvx^1, \hdots, \rvx^N$ as
\begin{equation}
p_\theta(\rvx^1, \hdots, \rvx^N, G) = p(G) \prod_n p_\theta(\rvx^n|G). \label{eq:joint_model}
\end{equation}
We aim to fit $\theta$, the parameters of our non-linear ANM, using observational data. Once this model is fit, the posterior $p_\theta(G|\rvx^1,\hdots,\rvx^N)$ characterizes our beliefs about the causal structure.

\textbf{Prior over Graphs. }
The graph prior $p(G)$ should characterize the graph as a DAG. We implement this by leveraging the continuous DAG penalty from \citet{zheng2018dags},
\begin{equation}
h(G) = \mathrm{tr}\left(e^{G\odot G}\right) - D, \label{eq:dagness}
\end{equation}
which is non-negative and zero only if $G$ is a DAG. We then implement the prior as
\begin{equation}
    p(G) \propto \exp \left(-\lambda_s \Vert G\Vert_F^{2} - \rho \, h(G)^2 - \alpha \, h(G)\right), 
    \label{eq: soft prior}
\end{equation}
where we weight the DAG penalty by $\alpha$ and $\rho$. These are gradually increased during training following an augmented Lagrangian scheme, ensuring only DAGs remain at convergence. We introduce prior knowledge about graph sparseness by penalising the norm $\lambda_{s}\|G\|_{F}$, with $\lambda_{s}$ a scalar.

\textbf{Likelihood of Structural Equation Model.}
Following \citet{khemakhem2021causal}, we factorise the observational likelihood $p_\theta(\rvx^n|G)$ in an autoregressive manner.
Rearranging the ANM assumption \cref{eq:ANM_vec_f}, we have $\rvz=g_G(\rvx; \theta)=\rvx-f_G(\rvx; \theta)$.
The components of $\rvz$ are independent. If we have a distribution $p_{z_i}$ for component $z_i$, then we can write the observational likelihood as
\begin{equation}
    p_\theta(\rvx^n|G) = p_\rvz\left(g_G(\rvx^n; \theta)\right) =  \prod_{i=1}^D p_{z_i}\left(g_G(\rvx^n; \theta)_i\right), \label{eq:likelihoodterm}
\end{equation}
where we omitted the Jacobian-determinant term
because it is always equal to one for DAGs $G$ \citep{mooij2011causal}.
The choice of $f_G:\mathbb{R}^d \rightarrow \mathbb{R}^d$ must satisfy the adjacency relations specified by the graph $G$. If there is no edge $j \rightarrow i$ in $G$, then the function $f_i(\rvx)$---the $i$-th component of the output of $f_G(\rvx)$--- must satisfy $\partial f_i(\rvx)/\partial x_j = 0$. We propose a flexible parameterization that satisfies this by setting
\begin{equation}\label{eq:fcause_functions}
    f_i(\rvx) = \zeta_i\left(\sum_{j=1}^d G_{j, i} \,\, \ell_j(x_j) \right),
\end{equation}
where $G_{j,i}\in\{0, 1\}$ indicates the presence of the edge $j\rightarrow i$, and $\ell_i$ and $\zeta_i$ ($i=1,\hdots,d$) are MLPs. A na{\"i}ve implementation would require training $2D$ neural networks. Instead, we construct these MLPs so that their weights are shared across nodes as $\zeta_i(\cdot) = \zeta(\mathbf{u}_i, \cdot)$ and $\ell_i(\cdot) = \ell(\mathbf{u}_i, \cdot)$, with $\mathbf{u}_i\in\mathbb{R}^D$ a trainable embedding that identifies the source and target nodes respectively. %

\textbf{Exogenous Noise Model $p_{\rvz}$.} 
We consider two possible models for the distribution of $\rvz$. 1) A simple Gaussian $p_{z_i}(\cdot) = \mathcal{N}\left(\cdot \vert 0, \sigma_i^2 \right)$, where per-variable variances $\sigma_i^2$ are learnt.
2) A flow \cite{rezende2015variational}
\begin{gather}\label{eq:noise_model}
  p_{z_i}(z_{i}) = \mathcal{N}\left(\kappa^{-1}_i(z_{i}) \vert 0, 1\right) \left|\frac{\partial \kappa^{-1}_i(z_{i})}{\partial z_{i} }\right|.
\end{gather}
We choose the learnable bijections $\kappa_i$ to be a rational quadratic splines \cite{Durkan2019spline}, parametrised independently across dimensions. We do not couple across dimensions since our SEM requires independent noise variables. 
Spline flows are significantly more flexible than the Gaussian distributions employed in previous work \cite{lachapelle2019gradient, ng2019graph, ng2020role, zheng2018dags, zheng2020learning}.

\textbf{Optimization and Inference Details. }
The model described presents two challenges. First, the true posterior over $G$ is intractable. Second, maximum likelihood cannot be used to fit the model parameters, due to the presence of the latent variable $G$. We simultaneously overcome both of these challenges using variational inference \citep{blei2017variational, jordan1999introduction,  zhang2018advances}. We define a variational distribution $q_\phi(G)$ to approximate the intractable posterior $p_\theta(G|\rvx^1,\hdots, \rvx^N)$, and use it to build  the ELBO, given by
\begin{equation}
    \mathrm{ELBO}(\theta, \phi) = \mathbb{E}_{q_\phi(G)} \left[ \log p(G) \prod_n p_\theta(\rvx^n|G) \right] + H(q_\phi)
    \leq \log p_\theta(\rvx^1, \hdots, \rvx^N), 
    \label{eq:DECIELBO}
\end{equation}
where $H(q_\phi)$ represents the entropy of the distribution $q_\phi$ and $p_\theta(\rvx^n|G)$ takes the form of \cref{eq:likelihoodterm} (derivation in \Cref{sec:elboderivation}). We choose $q_\phi(G)$ to be the product of independent Bernoulli distributions for each potential directed edge in $G$.
We parametrize edge existence and edge orientation separately, using the ENCO parametrization \cite{lippe2021efficient}.
The SEM parameters $\theta$ and variational parameters $\phi$ are trained by maximizing the ELBO. The Gumbel-softmax trick \citep{jang2016categorical, maddison2016concrete} is used to stochastically estimate the gradients with respect to $\phi$. \Cref{app:algorithm} details the full optimisation procedure.

\textbf{Unified View of functional Causal Discovery. } We note that, like DECI, many functional causal discovery methods \citep{lachapelle2019gradient, ng2019graph, ng2020role, zheng2018dags, zheng2020learning} can be seen from a probabilistic perspective as fitting an autoregressive flow (with a hard acyclicity constraint) for different choices for the exogenous noise distribution $p_\rvz$ and transformation function $f$. We expand on the details of this perspective and formalising it in \Cref{app:unified_flow}. DECI employs NNs for $f$ and flexible, potentially non-Gaussian, distributions for $p_\rvz$, making it the most flexible member of this family.

\subsection{Theoretical Considerations for DECI} \label{sec:deci_consistency}

We now show that maximizing the ELBO from \cref{eq:DECIELBO} recovers both the ground truth data generating process $p(\bm{x};G^0)$ and true causal graph $G^0$ in the infinite data limit. This is formalized in \Cref{thm: consistency of DECI}. The assumptions required by the theorem, which are common in causal discovery research, can be informally summarized as (formal assumptions in \Cref{app: FCause Consistency}):
\vspace{-0.3cm}
\begin{itemize}[leftmargin=*]
  \setlength\itemsep{0pt}
    \item \emph{Minimality} and \emph{Structural Identifiability}, satisfied by a continuous non-linear ANM 
    \citep{peters2010identifying},
    
    \item \emph{Correct Specification}, there exists $\theta^*$ such that $p_{\theta^*}(\rvx;G^0)$ matches the data-generating process,
    
    \item \emph{Causal Sufficiency}, there are no latent confounders,
    
    \item \emph{Regularity of log likelihood}, for all $\theta$ and $G$ we have  $\mathbb{E}_{p(\rvx;G^0)}\left[\vert\log p_\theta(\rvx;G)\vert\right]<\infty$.
\end{itemize}
\vspace{-0.12cm}
\begin{theorem}[DECI recovers true data generating process]
Under assumptions 1-5 (\Cref{app: FCause Consistency}), the solution $(\theta',q'_\phi({G}))$ from maximizing the ELBO (\cref{eq:DECIELBO}) satisfies $q'_\phi({G})=\delta({G}={G}')$ where ${G}'$ is a unique graph. In particular, ${G}'={G}^0$ and $p_{\theta'}(\mathbf{x};{G}')=p(\mathbf{x};{G}^0)$. 
\label{thm: consistency of DECI}
\end{theorem}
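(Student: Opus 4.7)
The plan is to analyse the ELBO's maximiser in the infinite-sample limit and show it is uniquely attained at the true graph. Dividing \cref{eq:DECIELBO} by $N$, the prior contribution $\mathbb{E}_{q_\phi}[\log p(G)]/N$ and the entropy $H(q_\phi)/N$ are $O(1/N)$ and vanish, while the strong law of large numbers together with the regularity assumption $\mathbb{E}_{p(\rvx;G^0)}|\log p_\theta(\rvx;G)|<\infty$ give
\begin{equation*}
\tfrac{1}{N}\mathrm{ELBO}(\theta,\phi)\;\xrightarrow{\text{a.s.}}\; J(\theta,\phi) \;:=\; \mathbb{E}_{q_\phi(G)}\!\left[\mathbb{E}_{p(\rvx;G^0)}[\log p_\theta(\rvx;G)]\right].
\end{equation*}
Hence asymptotic ELBO maximisers coincide with maximisers of the population objective $J$, and the remainder of the proof analyses $J$.

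For each $G$, Gibbs' inequality rewrites the inner expectation as
\begin{equation*}
\mathbb{E}_{p(\rvx;G^0)}[\log p_\theta(\rvx;G)] \;=\; -H(p(\cdot;G^0)) \;-\; \mathrm{KL}\!\left(p(\cdot;G^0)\,\Vert\,p_\theta(\cdot;G)\right) \;\le\; -H(p(\cdot;G^0)),
\end{equation*}
with equality iff $p_\theta(\cdot;G)=p(\cdot;G^0)$. Averaging over $q_\phi$, $J(\theta,\phi)\le -H(p(\cdot;G^0))$, with equality iff $q_\phi$ is supported on $\mathcal{G}^\star := \{G : \exists\,\theta,\; p_\theta(\cdot;G)=p(\cdot;G^0)\}$ and $\theta$ realises the match on every $G\in\mathrm{supp}(q_\phi)$. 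Correct specification exhibits the feasible point $(\theta^*,\delta_{G^0})$, so the upper bound is attained. It then remains to show $\mathcal{G}^\star=\{G^0\}$: causal sufficiency places the data-generating law in the non-linear ANM class; structural identifiability \citep{peters2010identifying} uniquely determines the graph within this class from the observational distribution; minimality rules out proper sub-graphs of $G^0$ as alternative matches. Consequently any maximiser $q'_\phi$ must put all its mass on $G^0$, yielding $q'_\phi(G)=\delta(G=G^0)$ and $p_{\theta'}(\cdot;G^0)=p(\cdot;G^0)$.

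The main obstacle is the identifiability step: one must verify that DECI's specific parametric family, namely autoregressive flows with embedding-shared MLPs for $f_G$ and Gaussian or rational-quadratic-spline exogenous noise, is expressive enough for the non-linear ANM identifiability result to apply within the DECI model class, so that genuinely no $\theta$ can match $p(\cdot;G^0)$ under any $G\neq G^0$. Secondary technicalities are (i) upgrading the almost-sure pointwise convergence of $\tfrac{1}{N}\mathrm{ELBO}$ to uniform convergence in $(\theta,\phi)$ so that $\arg\max$ passes to the limit, which is standard M-estimation machinery under the regularity assumption, and (ii) checking that the augmented-Lagrangian schedule $\alpha,\rho\to\infty$ concentrates $q_\phi$ on DAGs, since the identifiability theorem applies only there.
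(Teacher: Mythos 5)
Your proposal is correct and follows essentially the same route as the paper: the prior and entropy contributions vanish at rate $1/N$ (the paper's Lemma~1 on $\tfrac{1}{N}\mathrm{KL}[q_\phi(G)\Vert p(G)]$), the law of large numbers yields the population objective, KL non-negativity (the paper's $\log t \le t-1$ step in Proposition~1) forces every graph in the support of $q_\phi$ to admit a $\theta$ matching $p(\cdot\,;G^0)$, and structural identifiability then gives uniqueness, so the maximiser collapses to $\delta(G=G^0)$ with $p_{\theta'}(\cdot\,;G^0)=p(\cdot\,;G^0)$. The ``main obstacle'' you flag---whether DECI's specific parametric family admits no cross-graph match---is exactly what the paper assumes away via Assumptions~2 and~3 (structural identifiability and correct specification), so it is not a gap in the argument as stated.
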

\vspace{-0.12cm}
The proof is in \Cref{app: FCause Consistency}. It consists of two key steps: (i) the maximum likelihood estimate (MLE) of $(\theta,G)$ recovers $p(\bm{x};G^0)$, (ii) solutions from maximizing the ELBO approach the MLE in the large data limit. 
Specifically, we show that DECI induces the same joint likelihood as the ground truth and the posterior $q_\phi({G})$ is a delta function $\delta(G=G^0)$ concentrated on the true graph $G^0$.

\subsection{Estimating Causal Quantities} \label{sec:deci_ate}

We now show how the generative model learnt by DECI can be used to evaluate expectations under interventional distributions, and thus estimate ATE and CATE.
As explained above, DECI returns $q_\phi(G)$, an approximation of the posterior over graphs given observational data.  
Then, interventional distributions and treatment effects can be obtained by marginalizing over graphs as
\begin{gather}
 \E_{q_\phi(G)}\left[p\left(\rvx_{Y} | \Do{\rvx_{T}{=}\rva},  G\right)\right],\quad
    \E_{q_\phi(G)}[ \text{ATE}(\rva, \rvb | G)]\quad \mbox{and} \quad
    \E_{q_\phi(G)}[ \text{CATE}(\rva, \rvb | \rvc, G)]. \label{eq:cate_ref}
    \vspace{-11pt}
\end{gather}
This can be seen as a probabilistic relaxation of traditional causal quantity estimators. When have observed enough data to be certain about the causal graph, i.e.  $q_\phi(G) = \delta(G-G_{i})$, our procedure matches traditional causal inference. 
We go on to discuss how DECI estimates (C)ATE.

\begin{wrapfigure}{r}{0.43\textwidth}
\vspace{-1.1cm}
  \begin{center}
    \includegraphics[width=\linewidth]{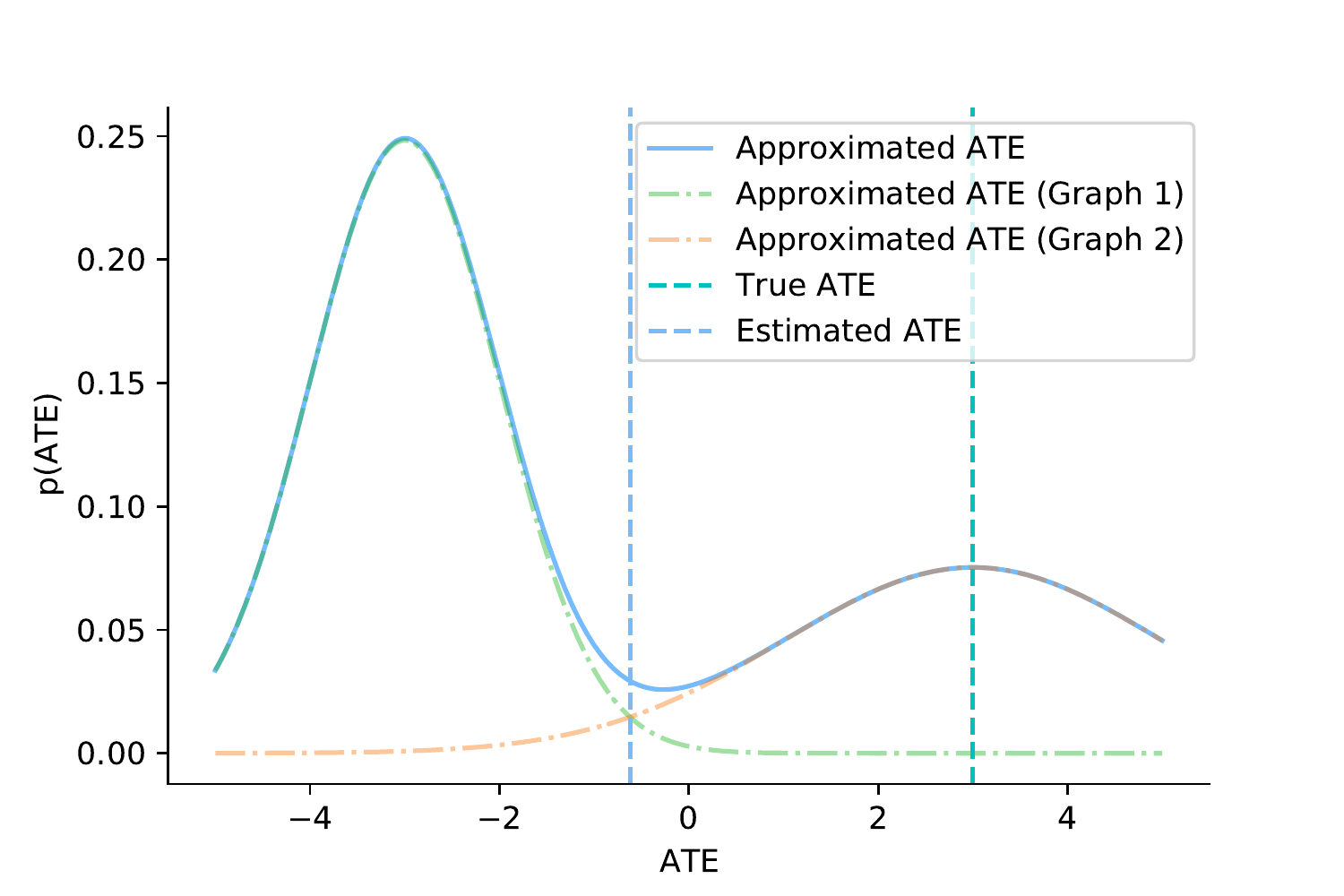}
  \end{center}
  \vspace{-.25cm}
  \caption{Marginalizing ATE estimates from graph and exogenous noise samples.}
    \label{fig:multimodal_ate}
\end{wrapfigure}

\textbf{Estimating ATE. }
After training, we can use the model learnt by DECI to simulate new samples $\rvx$ from $p_{\theta}(\rvx | G)$. We sample a graph $G \sim q_{\phi}(G)$ and a set of exogenous noise variables $\rvz \sim p_{\rvz}$. We then input this noise into the learnt DECI structural equation model to simulate $\rvx$, by applying \cref{eq:ANM_vec_f} and \cref{eq:fcause_functions} on $\rvz$ in the topological order defined by $G$. 
However, ATE estimation requires samples from the interventional distribution $p\left(\rvx_{\setminus T} | \Do{\rvx_{T}=\rvb}, G\right)$. These can be obtained by noting that
\begin{gather}
    p\left(\rvx_{\setminus T} | \Do{\rvx_{T}=\rvb}, G\right) = p\left(\rvx_{\setminus T} | \rvx_{T}=\rvb, G_{\Do{\rvx_{T}}}\right), \nonumber
\end{gather}
where $G_{\Do{\rvx_{T}}}$ is the ``mutilated'' graph obtained by removing incoming edges to $\rvx_{T}$. Thus, samples from this distribution can be obtained by following the sampling procedure explained above, but fixing the values $\rvx_{T}=\rvb$ and using $G_{\Do{\rvx_{T}}}$ instead of $G$. Finally, we use these samples to obtain a Monte Carlo estimate of the expectations required for ATE computation \cref{eq:ATE}. 
\Cref{fig:multimodal_ate} illustrates that these samples are from a mixture distribution when the posterior has not collapsed to one graph. 

\textbf{Estimating CATE. }
We focus on CATE estimation for which the treatment $\rvx_T$ is not the cause of the conditioning set $\rvx_C$, i.e. there is no directed path from $T$ to $C$ in $G$.
Under this assumption, we can estimate CATE by sampling from the interventional distribution $p\left(\rvx_{\setminus T} | \Do{\rvx_{T}}, G\right)$ and then estimating the conditional distribution of $\rvx_Y$ given $\rvx_C$.
To make this precise, we let $Y = X\setminus{(T \cup C)}$ denote all variables that we do not intervene or condition on. Conditional densities
\begin{equation}\label{eq:conditional_densities}
    p_{\theta}(\rvx_Y\mid \text{do}(\rvx_T{=}\rvb), \rvx_C{=}\rvc, G) = \frac{p_{\theta}(\rvx_Y, \rvx_C{=}\rvc | \rvx_T{=}\rvb, G_{\text{do}(\rvx_T)})}{ p_{\theta}(\rvx_C{=}\rvc | \rvx_T{=}\rvb,  G_{\text{do}(\rvx_T)})}
\end{equation}
are not directly tractable in the DECI model
due to the intractability of the marginal $p_{\theta}(\rvx_C{=}\rvc | \rvx_T{=}\rvb,  G_{\text{do}(\rvx_T)})$. However, we can always sample from the joint interventional distribution $p_{\theta}(\rvx_Y, \rvx_C | \rvx_T{=}\rvb,  G_{\text{do}(\rvx_T)})$.
We use samples from this joint to train a surrogate regression model $g_{G}$\footnote{Subscript $G$ allows differentiating surrogate models fit on samples from different graphs drawn from $q_\phi$.} to the relationship between $\rvx_{C}$ and $\rvx_{Y}$. Specifically, we minimize the square loss
\begin{gather}
   \E_{ p_{\theta}(\rvx_Y, \rvx_C | \rvx_T{=}\rvb,  G_{\text{do}(\rvx_T)})} \left[\Vert \rvx_{Y} -  g_{G}(\rvx_{C}) \Vert^2 \right]. \nonumber
\end{gather}
making $g_G$ approximate the conditional mean of $\rvx_Y$.
We choose $g_{G}$ to be a basis-function linear model with random Fourier basis functions \citep{Felix2016orthogonal}. As illustrated in \Cref{fig:cate_illustration}, we train two separate surrogate models, one for our intervention $\rvx_{T} = \rva$ and one for the reference $\rvx_{T} = \rvb$.
We estimate CATE as the difference between their outputs evaluated at $\rvx_{C}=\rvc$. This process is repeated for multiple posterior graphs samples $G\sim q_{\phi}$, allowing us to marginalise the posterior graphs
\begin{equation}
    \E_{q_\phi(G)}\left[ g_{G_{\text{do}(\rvx_T{=}\rva)}}(\rvx_{C}=\rvc) - g_{G_{\text{do}(\rvx_T{=}\rvb)}}(\rvx_{C}=\rvc))\right].
\end{equation}

\textbf{General ECI Framework. }
The probabilistic treatment of the DAG, and the re-use of functional causal discovery generative models for simulation-based causal inference are principles that can be applied beyond DECI. 
Constraint-based \cite{spirtes1991algorithm} and score-based \citep{chickering2002optimal} discovery methods often output a set of DAGs compatible with the data, i.e. a PAG or CPDAG. It is natural to interpret these equivalence classes as uniform distributions over members of sets of graphs.
We can then use \cref{eq:cate_ref} to estimate causal quantities by marginalizing over these distributions.
The quantities inside the expectations over graphs can be estimated using any existing causal inference method, such as linear regression \citep{sharma2021dowhy},  Double ML \citep{chernozhukov2018double}, etc. 
Our experiments explore combinations of discovery methods that return graph equivalence classes with standard causal inference methods.
We take expectations over causal graphs since these return the quantity that minimises the posterior expected squared error in our (C)ATE estimates while noting that the best statistic will be application dependent.

\begin{figure*}[t]
    \vspace{-0.5cm}
    \centering
    \resizebox{0.95\textwidth}{!}{
    \input{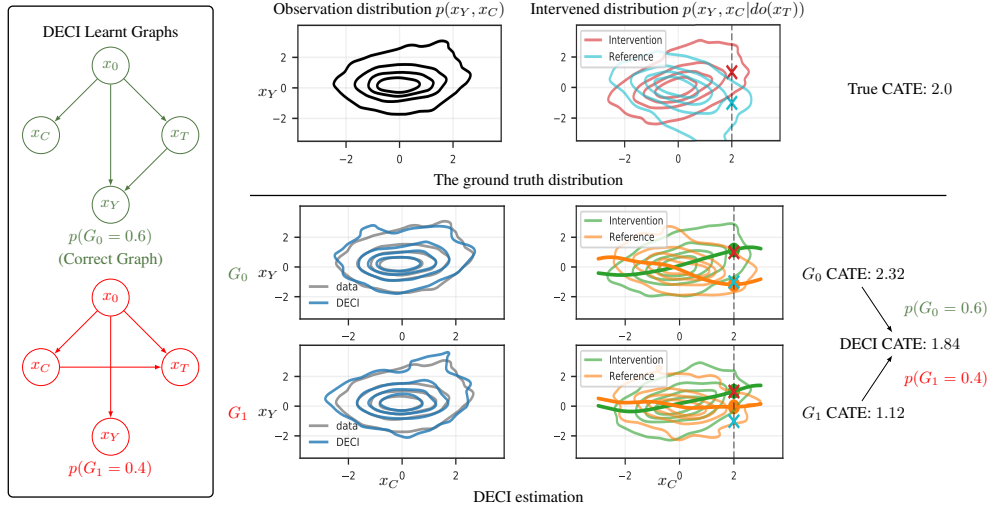}
    }
    \vspace{-0.1cm}
    \caption{DECI CATE estimation on the CSuite Symprod Simpson dataset. Left: The DECI graph posterior has two modes with $p(G)=0.6$ for the correct graph and $p(G)=0.4$ for an alternative possibility with some incorrect edges.
    Middle: we display the joint distribution of conditioning and effect variables in the observational setting and under interventions on $\rvx_{T}$. DECI captures the observational density well.
    Right: interventional distributions with their conditional means $\rvx_{C}=\rvc$ marked with crosses. DECI predicts conditional expectations by fitting functions from $\rvx_{C}$ to $\rvx_{Y}$ and evaluating them at $\rvc$. DECI outputs CATE by marginalizing the result over possible graphs. 
    }
    \vspace{-10pt}
    \label{fig:cate_illustration}
\end{figure*}
\subsection{DECI for Real-world Heterogeneous Data} \label{sec:deciforthewin}
We extend DECI to handle mixed-type (continuous and discrete) data and data with missing values, which often arise in real-world applications.

\textbf{Handling Mixed-type Data.}
For discrete-valued variables, we remove the additive noise structure and directly parameterise parent-conditional class probabilities 
\begin{equation}
    p^\text{discrete}_\theta\left(x_i | \rvx_{\Pa{i;G}};G\right) = P_i\left(\rvx_{\Pa{i;G}};\theta\right)(x_i),
\end{equation}
where $P_i\left(\rvx_{\Pa{i;G}};\theta\right)$ is a normalised probability mass vector over the number of classes of  $x_i$, obtained by applying the softmax operator to $f_i(\rvx_{\Pa{i;G}})$. This means that for discrete variables, the output of $f_i$ is a vector of length equal to the number of classes for variable $i$.
This approach gives a valid likelihood for $p_\theta(\rvx^n|G)$ which we use to train DECI. However, since the full generative model is no longer an ANM, we cannot guarantee that \Cref{thm: consistency of DECI} applies in this setting.

\textbf{Handling Missing Data.} We propose an extension of DECI to partially observed data.\footnote{We assume that values are missing (completely) at random, the most common setting \citep{ma2018eddi,rubin1976inference, stekhoven2012missforest, strobl2018fast}.} We use $\rvx^n_o$ to denote the observed components of $\rvx^n$, $\rvx^n_u$ to denote the unobserved components, and their joint density in the observational environment is $p_\theta(\rvx^n_o, \rvx^n_u)$. We approximate the posterior $p(G, \rvx^n_u|\rvx^n_o)$ with the variational distribution,
\begin{equation}
q_{\phi, \psi}\left(G, \rvx^1_u, \hdots, \rvx^N_u|\rvx^1_o, \hdots, \rvx^N_o\right) = q_\phi(G) \prod_n q_{\psi}(\rvx^n_u|\rvx^n_o),  \nonumber
\end{equation}
which yields the following learning objective
\begin{equation}
\vspace{-1pt}
\mathrm{ELBO}(\theta, \phi, \psi) = H(q_\phi) + \sum_n H(q_{\psi}(\rvx_{u}^{n} | \rvx_{o}^{n})) + \mathbb{E}_{q_{\phi, \psi}} \left[\log p(G) \prod_n p_\theta(\rvx^n_o,  \rvx^n_u|G)\right].\label{eq:notfullyobs}
\end{equation}
We parameterize the Gaussian imputation distribution $q_{\psi_n}(\rvx^n_u|\rvx^n_o)$ using an amortization network \citep{kingma2013auto}, whose input is $\rvx^n_o$, and output the mean and variance of the imputation distribution $q_{\psi}(\rvx^n_u|\rvx^n_o)$.

\section{Experiments}

We evaluate DECI on both causal discovery and causal inference tasks.
A full list of results and details of the experimental set-up are in Appendices~\ref{sec:DECIdets} and \ref{sec:app:additional}.
Our code is in the supplement.

\subsection{Causal Discovery Evaluation} \label{sec:CD_exp}

\begin{wrapfigure}{r}{0.57\textwidth}
\vspace{-1.5cm}
  \begin{center}
    \includegraphics[width=\linewidth, trim = {0 3.5cm 0.3cm 0}, clip]{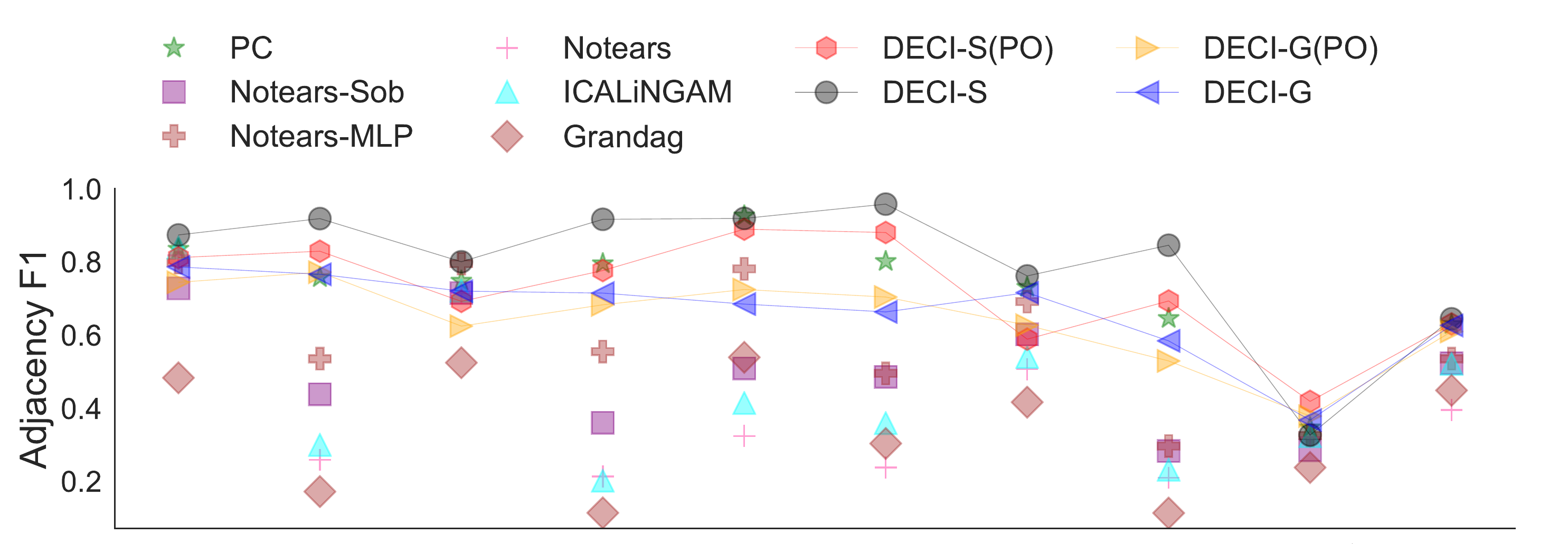}
    
    \includegraphics[width=\linewidth, trim = {0 3.5cm 0.3cm 2.95cm}, clip]{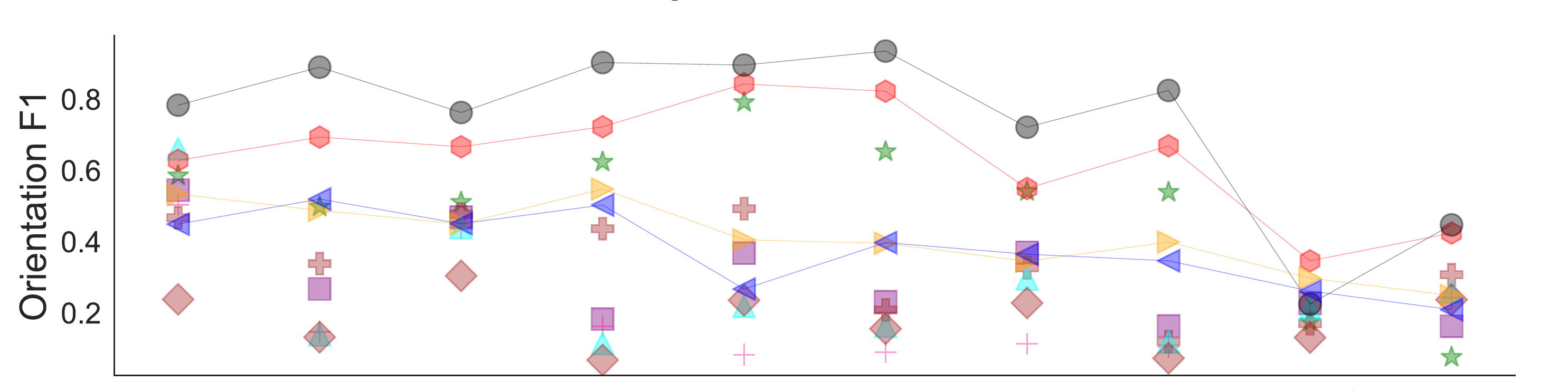}
    
    \includegraphics[width=\linewidth, trim = {0 0 0.3cm 2.95cm}, clip]{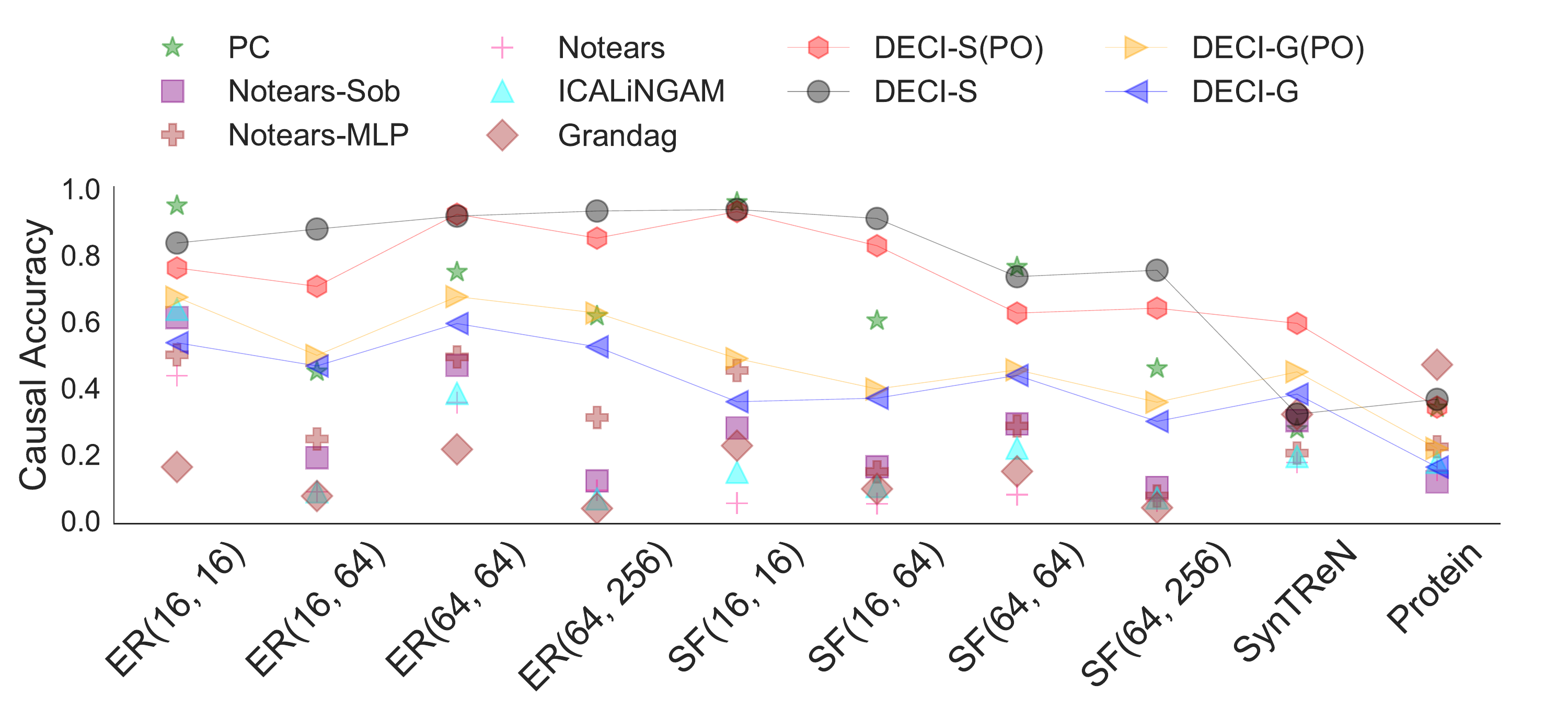}
  \end{center}
  \vspace{-0.4cm}
  \caption{Causal discovery on benchmark datasets. 
    The label \textit{(PO)} corresponds to running DECI with 30\% of the training data missing. For readability, the DECI results by are connected with with soft lines. The figure shows mean results across five different random seeds.}
    \label{fig:cd_summary}
    \vspace{-0.5cm}
\end{wrapfigure}

\textbf{Datasets.}
We consider synthetic, pseudo-real, and real data. For the synthetic data, we follow \citet{lachapelle2019gradient} and \citet{zheng2020learning} by sampling a DAG from two different random graph models, \textbf{Erd\H{o}s-R\'{e}nyi (ER)} and \textbf{scale-free (SF)}, and simulating each ANM $x_i = f_i(\rvx_{\Pa{i;G}}) + z_i$, where $f_i$ is a nonlinear function (randomly sampled spline). %
We consider two noise distributions for $z_i$, a standard Gaussian and a more complex one obtained by transforming samples from a standard Gaussian with an MLP with random weights. We consider number of nodes $d \in \{16, 64\}$ with number of edges
$e \in \{d, 4d\}$. The resulting datasets are identified as \textbf{ER$(d, e)$} and \textbf{SF$(d, e)$}. All datasets have $n{=}5000$ training samples.

For the pseudo-real data we consider the \textbf{SynTReN} generator \citep{van2006syntren}, which creates synthetic transcriptional regulatory networks and produces simulated gene expression data that mimics experimental data. We use the datasets generated by \cite{lachapelle2019gradient} ($d{=}20$), and take $n{=}400$ for training. Finally, for the real dataset, we use the protein measurements in human cells from \citet{sachs2005causal}. 
We use a training set with $n{=}800$ observational samples and $d{=}11$.

\textbf{Baselines.} We run DECI using two models for exogenous noise: a Gaussian with learnable variance (identified as DECI-G) and a spline flow (DECI-S). We compare against \textit{PC} \citep{kalisch2007estimating}, (linear) \notears\ \citep{zheng2018dags}, the nonlinear variants \notearsmlp\ and \notearssob\ \citep{zheng2020learning}, \grandag\ \citep{lachapelle2019gradient}, and \textit{ICALiNGAM} \citep{shimizu2006linear}. When a CPDAG is the output, e.g., from PC, we treat all possible DAGs under the CPDAG as having the same probability. All baselines are implemented with the 
\texttt{gcastle} package \citep{zhang2021gcastle}.

\textbf{Causality Metrics.} We report F1 scores for adjacency, orientation \citep{glymour2019review, tu2019causal} and causal accuracy \citep{claassen2012bayesian}. For DECI, we report the expected values of these metrics estimated over the graph posterior.

\Cref{fig:cd_summary} shows the results for the data generated with non-Gaussian noise. We observe that DECI achieves the best results across all metrics. Additionally, using the flexible spline model for the exogenous noise (DECI-S) yields better results than the Gaussian model (DECI-G). This is expected, as the noise used to generate the data is non-Gaussian. For Gaussian noise (see \Cref{fig:cd_summary_gauss}), both DECI-S and DECI-G perform similarly. Moreover, when data are partially observed \textit{(PO)}, the strong performance of DECI remains, showing that DECI can handle missing data efficiently.

\subsection{End-to-end Causal Inference}

We evaluate the \emph{end-to-end} pipeline, taking in observational data and returning (C)ATE estimates.

\textbf{Datasets.}
We generate ground-truth treatment effects to compare against for the \textbf{ER} and \textbf{SF} synthetic graphs that were described in \Cref{sec:CD_exp} by applying random interventions on these synthetic SEMs, ensuring at most 3 edges between the intervention and effect variables.
For more detailed analysis, we hand-craft a suite of synthetic SEMs, which we name \textbf{CSuite}. CSuite datasets elucidate particular features of the model, such as identifiability of the causal graph, correct specification of the SEM, exogenous noise distributions, and size of the optimal adjustment set. 
We draw conditional samples from CSuite SEMs with HMC, allowing us to evaluate CATE. 
Finally, we include two semi-synthetic causal inference benchmark datasets for ATE evaluation: \textbf{Twins} (twin birth datasets in the US) \citep{almond2005costs} and \textbf{IHDP} (Infant
Health and Development Program data) \citep{hill2011bayesian}. 
See \Cref{app:benchmark} for all experimental details.

\setlength{\tabcolsep}{1.5pt}
\begin{wraptable}{r}{0.57\textwidth}
    \vspace{-0.44cm} 
    \centering
    \caption{Method rank on different (CSuite, Twins, IHDP and ER/SF) datasets, ranking by median ATE RMSE. We present mean $\pm 1$ s.e. of the rank over 27 datasets. Supporting data in \Cref{tab:ate_rmse}. Bold indicates the possible top methods, accounting for error bars. We treat methods with access to the true graph separately. 
    }
    \small
    \begin{tabular}{lr} \toprule
        Method & Mean rank \\
        \midrule
        DECI Gaussian (DGa)  & $\mathbf{6.26 \pm 0.60}$ \\
        DECI Gaussian DoWhy Linear (DGa+L)  & $8.37 \pm 0.50$ \\
        DECI Gaussian DoWhy Nonlinear (DGa+N)  & $8.52 \pm 0.51$ \\
        DECI Spline (DSp)  & $\mathbf{6.04 \pm 0.68}$ \\
        DECI Spline DoWhy Linear (DSp+L)  & $7.78 \pm 0.60$ \\
        DECI Spline DoWhy Nonlinear (DSp+N)  & $\mathbf{6.63 \pm 0.66}$ \\
        PC + DoWhy Linear (PC+L)  & $8.87 \pm 0.41$ \\
        PC + DoWhy Nonlinear (PC+N)  & $7.54 \pm 0.45$ \\
        \midrule
        True graph DECI Gaussian (T+DGa)  & $\mathbf{3.74 \pm 0.47}$ \\
        True graph DECI Spline (T+DSp)  & $\mathbf{4.19 \pm 0.56}$ \\
        True graph DoWhy Linear (T+L)  & $4.87 \pm 0.58$ \\
        True graph DoWhy Nonlinear (T+N)  & $5.20 \pm 0.71$\\ \bottomrule
    \end{tabular}
    \label{tab:ate_rmse_rank}
\end{wraptable}

\textbf{Baselines.}
To thoroughly evaluate end-to-end inference, we consider different ways of \emph{combining} discovery and inference algorithms. For DECI, we can use a trained model to immediately estimate (C)ATE. We also consider using the learned DECI graph posterior in combination with existing methods for causal inference on a known graph: {DoWhy-Linear} and {DoWhy-Nonlinear} \citep{sharma2021dowhy} which implement linear adjustment and Double Machine Learning (DML) \citep{chernozhukov2018double} methods for backdoor adjustment respectively.
We also pair other \emph{discovery} methods with DECI and DoWhy treatment effect estimation, namely the PC algorithm as a baseline and the ground truth graph (when available) as a check.
We evaluate end-to-end causal inference on all valid combinations that arise from combining \emph{discovery} methods in \{DECI-Gaussian (DGa), DECI-Spline (DSp), PC, and True graph (T)\} with causal \emph{inference} methods in \{DECI-Gaussian (DGa), DECI-Spline (DSp), DoWhy-Linear (L), DoWhy-Nonlinear (N)\}. 

\textbf{Metrics.} We report RMSE between (C)ATE estimates and the ground truth.

\Cref{tab:ate_rmse_rank} provides a high-level summary of our results. For each dataset, we estimated the ATE using each combination of methods, computed the RMSE and took the median over random seeds. We then ranked methods for each dataset (with 1 being the best) and aggregated over the 27 datasets. We find that DECI Spline has the overall best (lowest) rank. RMSE scores are in \Cref{sec:app:additional}.

\begin{figure}
\vspace{-1.0cm}
    \centering
    \includegraphics[width=0.98\linewidth]{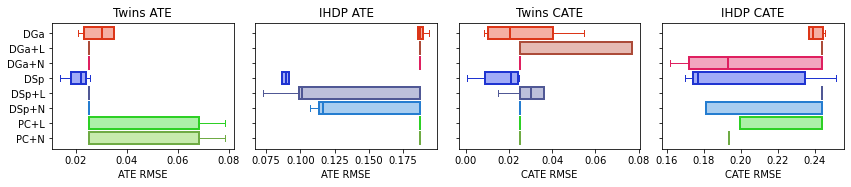}
    \vspace{-5pt}
    \caption{Box plots showing end-to-end ATE and CATE estimation error on the semi-synthetic Twins and IHDP datasets with different method combinations. Method acronyms are as in Table~\ref{tab:ate_rmse_rank}.
    }%
    \label{fig:twins_ihdp}
    \vspace{-0.5cm}
\end{figure}

In \Cref{tab:csuite_anm}, we present detailed results for six CSuite datasets. \textbf{Lin. Exp} is a two node linear SEM with exponential noise, only DECI Spline can recover the true graph, ATE estimation quality is similar for different estimator once the true graph is found. \textbf{Nonlin. Gauss} is a two node non-linear SEM with Gaussian noise, only DECI can fit the highly non-linear functional relationship, with equal performance between DECI-Gaussian and -Spline. \textbf{Large backdoor} is a larger nonlinear SEM with non-Gaussian noise in which adjusting for all confounders is valid, but of high variance. For DECI-Spline, which performs well on discovery, the ATE estimation is best using DECI, as DoWhy takes the maximal adjustment set thereby increasing estimator variance.
\textbf{Weak arrows} is a similar SEM to Large backdoor, except that a maximal adjustment set is now necessary. Here, DECI-Spline is best for discovery, but is somewhat less accurate for ATE estimation given the right graph. \textbf{Nonlin. Simpson} is an adversarially constructed dataset where 1) the true graph is theoretically identifiable (it is a non-linear ANM), but difficult to discover in practice, 2) ATE estimation is very poor given the wrong graph (Simpson's paradox). All methods perform equally badly. \textbf{Symprod Simpson} is a similar but slightly easier dataset, for which DECI-Spline with DML does well.

\begin{wraptable}[18]{r}{0.57\textwidth}
    \vspace{-0.45cm}
    \centering
    \caption{Median ATE RMSEs from 20 seeds for six CSuite datasets.
    }
    \small
    \begin{tabular}{lcccccc}
\toprule
&     Lin. Exp & \shortstack{Nonlin. \\ Gauss} & \shortstack{Large\\ backdoor} & \shortstack{Weak \\ arrows} & \shortstack{Nonlin.\\ Simpson} & \shortstack{Symprod\\ Simpson} \\
\midrule
DGa                 &         1.029 &              \textbf{0.042} &                 0.213 &              1.097 &                 1.995 &                  0.318 \\
DGa+L    &         1.031 &              1.522 &                 0.144 &              1.108 &                 1.994 &                  0.695 \\
DGa+N &         1.031 &              1.532 &                 0.331 &              1.108 &                 1.994 &                  0.487 \\
DSp                   &         0.022 &              \textbf{0.043} &                 \textbf{0.031} &              0.189 &                 1.997 &                  0.427 \\
DSp+L     &         \textbf{0.001} &              1.522 &                 0.091 &              0.110 &                 1.994 &                  0.819 \\
DSp+N   &         0.002 &              1.532 &                 0.232 &              \textbf{0.064} &                 1.994 &                  \textbf{0.160} \\
PC+L              &         0.516 &              1.532 &                 1.690 &              1.108 &                 1.994 &                  0.487 \\
PC+N            &         0.517 &              1.532 &                 1.690 &              1.108 &                 1.994 &                  0.487 \\
\midrule
T+DGa      &         0.073 &              0.034 &                 0.167 &              0.255 &                 0.404 &                  0.101 \\
T+DSp        &         0.028 &              0.034 &                 0.035 &              0.128 &                 0.531 &                  0.242 \\
T+L       &         0.001 &              1.522 &                 0.105 &              0.109 &                 0.848 &                  0.819 \\
T+N    &         0.003 &              1.532 &                 0.241 &              0.015 &                 0.597 &                  0.168 \\
\bottomrule
\end{tabular}
\label{tab:csuite_anm}
\end{wraptable}

We performed similar analysis for ATE estimation on ER and SF datasets, on additional CSuite datasets that contain discrete variables or are not theoretically identifiable, and CATE estimation on a subset of CSuite. See \Cref{sec:app:additional}.

On the semi-synthetic benchmark datasets, Twins and IHDP, we evaluated both ATE and CATE estimation as shown in \Cref{fig:twins_ihdp}. For ATE estimation, DECI-Spline is fractionally better than baselines on Twins and significantly better for IHDP. On IHDP, it appears that only DECI-Spline was successful at causal discovery, and given the right graph, DECI-Spline is the best method for computing ATE. For CATE estimation, a similar pattern. %

\textbf{Summary}
Across all experiments we see that DECI enables end-to-end causal inference with competitive performance on both synthetic and more realistic data. DECI particularly performs well compared to other methods when its ability to handle nonlinear functional relationship and non-Gaussian noise distributions  comes into play in causal discovery \emph{or} causal inference. Other ECI method combinations can achieve strong performance, but have weak performance if either step's assumptions are violated. We find DECI-Spline particularly attractive given its high degree of flexibility---it generally performs on par with or better than other methods. %

\vspace{-3pt}
\section{Discussion, scope, and limitations}\label{sec:discussion}

\emph{Causal inference} requires causal assumptions on the relationships between variables of interest. 
The field of \emph{causal discovery} aims to learn about these relationships from observational data, given some non-causal assumptions on the data generating process.
Motivated by a real-world application where our knowledge of causal relationships is incomplete, DECI combines ideas from causal discovery and inference to go directly from observations to causal predictions.
This formulation requires us to adopt assumptions, %
namely, that the data is generated with a non-linear ANM and that there are no unobserved confounders. Empirically, we find DECI to perform well when these assumptions are satisfied, validating the viability of an end-to-end approach. However, the non-linear ANM assumptions made by DECI are impossible to check in most real-world scenarios. Thus, combining the output of discovery methods with incomplete causal assumptions is an attractive avenue to make end-to-end methods more robust in the future. Interestingly, even in our experiments where DECI's assumptions are violated (missing data, discrete type observations, etc), we do not find its performance to degrade severely. This encouraging result motivates us to extend our theoretical analysis to the mixed type and missing data settings in future work.

\begin{ack}
We would like to thank Vasilis Syrgkanis for insightful  discussions regarding causal inference methods and EconML usage; we thank Yordan Zaykov for engineering support; we thank Biwei Huang and Ruibo Tu for feedback that improved this manuscript; we thank Maria Defante, Karen Fassio, Steve Thomas and Dan Truax for insightful discussions on real-world needs which inspired the whole project. 
\end{ack}

\bibliography{bibliography}
\bibliographystyle{plainnat}

\clearpage
\appendix
\onecolumn

\section{Theoretical Considerations for DECI}
\label{app: FCause Consistency}

DECI can be categorized as a functional score-based causal discovery approach, which aims to find the model parameters $\theta$ and mean-field posterior $q_\phi(G)$ by maximizing the ELBO (\cref{eq:DECIELBO}). A key statistical property of DECI is whether it is capable of recovering the ground truth data generating distribution and true graph $G^0$ when DECI is \textbf{correctly specified} and with infinite data. In the following, we will show that DECI is indeed capable of this under standard assumptions. The main idea is to first show that the maximum likelihood estimate (MLE) recovers the ground truth due to the correctly specified model. Then, we prove that optimal solutions from maximizing the ELBO are closely related to the MLE under mild assumptions. 

\subsection{Notation and Assumptions}
First, we define the notation and assumptions required for the proof. We denote a random variable $\rvx\in\mathbb{R}^D$ with a ground truth data generating distribution $p(\rvx;{G}^0)$, where ${G}^0$ is a binary adjacency matrix representing the true causal DAG. DECI uses the additive noise model (ANM), defining the structural assignment $x_j = f(\rvx_{\Pa{j;{G}}};\theta)+z_j$, where $\Pa{j;G}$ are the parents of node $j$ specified by the adjacency matrix ${G}$ and $z_j$ are mutually independent noise variables with a joint distribution $p_\theta(z_1,\ldots,z_D)$. The mean-field variational distribution $q_\phi(G)$ is a product of independent Bernoulli distribution, and $p(G)$ is the soft prior over the graph defined by \cref{eq: soft prior}.

\begin{assumption}[Minimality]
For a distribution $p_\theta(\rvx;{G})$ generated by DECI with graph ${G}$ and parameter $\theta$, we assume the minimality condition holds \citep{spirtes2013directed}. Namely, the distribution $p_\theta(\rvx;{G})$ does not satisfy the local Markov condition with respect to any sub-graph of ${G}$. 
\end{assumption}
To satisfy this assumption in practice, one can leverage Proposition 17 from \citet{peters2014causal}, stating that the minimality condition can be satisfied if the the model is a continuous additive noise model (ANM) and its structural assignments are not a constant with respect to any of its arguments. In practice, one can always add an edge pruning step to remove spurious edges \citep{buhlmann2014cam, lachapelle2019gradient}. 

\begin{assumption}[DECI Structural Identifiability]
We assume that the DECI model satisfies the structural identifiability. Namely, for a distribution $p_\theta(\rvx;{G})$, the graph ${G}$ is said to be structural identifiable from $p_\theta(\rvx;{G})$ if there exists no other distribution $p_{\theta'}(\rvx;{G}')$ such that ${G}\neq{G}'$ and $p_\theta(\rvx;G)=p_{\theta'}(\rvx;G')$.
\end{assumption}
For general SEM, this assumption does not hold. In fact, one can always search for functions resulting in the independence of cause and mechanisms in both directions \citep{peters2017elements,zhang2015estimation}. However, by correctly restricting the function family and the form of structural assignments, one can obtain structural identifiability \citep{Hoyer2008discovery,peters2014causal, peters2014identifiability,peters2010identifying, shimizu2006linear,zhang2012identifiability}. From the formulation of DECI, it is a special case of the non-linear ANM \citep{Hoyer2008discovery,peters2014causal}. Given the non-linear ANM assumption, together with the minimality condition and some additional mild assumptions, Theorem 20 from \citet{peters2010identifying} proves that our DECI model is structural identifiable.
\begin{assumption}[Correctly Specified Model]
We assume the DECI model is correctly specified. Namely, there exists a parameter $\theta^*$ such that $p_{\theta^*}(\rvx;G^0)=p(\rvx;G^0)$.
\end{assumption}
In practice, this assumption is hard to check in general. However, we can leverage the universal approximation capacity of neural networks \citep{hornik1989multilayer}, meaning that they can approximate continuous functions arbitrarily well. This flexibility gives us a higher chance that this assumption indeed holds.
\begin{assumption}[Causal Sufficiency]
We assume DECI and the ground truth are causally sufficient. Namely, there are no latent confounders in the model. 
\end{assumption}
\begin{assumption}[Regularity of log likelihood]
We assume for all parameters $\theta$ and possible graphs $G$, the following holds:
\[
\mathbb{E}_{p(\rvx;G^0)}\left[\vert\log p_\theta(\rvx;G)\vert\right]<\infty
.\]
\end{assumption}

\subsection{MLE Recovers Ground Truth}
The likelihood has often been used as the score function for causal discovery. For example, \emph{Carefl}  \citep{khemakhem2021causal} adopts the likelihood ratio test \citep{hyvarinen2013pairwise} in the bivariate case, which is equivalent to selecting the causal directions with the maximized likelihood. However, they did not explicitly show that the resulting model recovers the ground truth for the multivariate case. In addition, \citet{zhang2015estimation} proved that maximizing likelihood for bivariate causal discovery is equivalent to minimizing the dependence between the cause and the noise variable. With the correctly specified, structural identifiable model, the resulting noise and cause are independent through maximizing the likelihood, indicating the graph is indeed causal. However, it is non-trivial to generalize this to the multivariate case that we treat in DECI. In the following, we will show that under a correctly specified model and with maximum likelihood training with infinite data, DECI can recover the unique ground truth graph $G^*=G^0$ and the true data generating distribution $p_{\theta^*}(\rvx;G^*)=p(\rvx;G^0)$, where $(\theta^*,G^*)$ are MLE solutions. 
\begin{proposition}
Assuming assumptions 1--5 hold, we denote $(\theta^*,G^*)$ as the MLE solution with infinite training data. Then, we have 
\[
p_{\theta^*}(\rvx;G^*) = p(\rvx;G^0)
\]
In particular, we have $G^* = G^0$.
\label{prop: MLE recovers ground truth}
\end{proposition}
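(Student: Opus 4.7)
The plan is to reduce the proposition to two well-known consequences of the stated assumptions: (i) that infinite-data maximum likelihood is equivalent to minimizing a KL divergence which, under correct specification, can be driven to zero; and (ii) that a distribution achieving this zero-KL uniquely pins down the graph by structural identifiability, with minimality ruling out "sparser" alternatives. So the target is to work in the population (infinite-data) regime throughout.

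First, I would rewrite the population log-likelihood as
\begin{equation*}
\mathbb{E}_{p(\rvx;G^0)}[\log p_\theta(\rvx;G)] = -\mathrm{KL}\!\left(p(\rvx;G^0)\,\Vert\,p_\theta(\rvx;G)\right) - H\!\left(p(\rvx;G^0)\right),
\end{equation*}
noting that the entropy term is independent of $(\theta,G)$, and that Assumption 5 guarantees both sides are finite so the rearrangement is valid. Hence the MLE problem over $(\theta,G)$ coincides with minimizing this KL divergence. By correct specification (Assumption 3), the choice $(\theta^*,G^0)$ achieves $p_{\theta^*}(\rvx;G^0)=p(\rvx;G^0)$, so the minimum KL value is exactly zero. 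Any MLE solution $(\theta^*,G^*)$ must therefore also achieve zero KL, i.e.\ $p_{\theta^*}(\rvx;G^*)=p(\rvx;G^0)$, which is the first claim.

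It then remains to show that $G^*=G^0$. Chaining equalities, we have $p_{\theta^*}(\rvx;G^*)=p(\rvx;G^0)=p_{\theta^*}(\rvx;G^0)$, so the same observational density is realized under DECI by two graphs $G^*$ and $G^0$ with (possibly different) parameters. Structural identifiability of DECI (Assumption 2) then forces $G^*=G^0$. The role of minimality (Assumption 1) is to exclude the degenerate situation in which $G^0$ itself is over-specified: without it, $p(\rvx;G^0)$ could also be expressed using a strict subgraph, giving a second valid MLE and breaking uniqueness. Causal sufficiency (Assumption 4) underwrites the entire ANM factorization so that the DECI likelihood can actually match the data-generating one. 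The "in particular" part of the statement, namely $G^*=G^0$, falls out of this uniqueness argument.

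The main obstacle, and the only place where a little care is required, is invoking structural identifiability correctly: it is stated as a property of DECI-realized distributions (no two DECI parameterizations with different graphs can induce the same density), but the equalities above are between DECI densities and the ground-truth density. The bridge is exactly Assumption 3, which places the ground truth inside the DECI family, and Assumption 1, which ensures that the "canonical" representative of the ground truth uses the full graph $G^0$ rather than a subgraph. Once this bridge is articulated, each step above is essentially a one-line deduction, and the rest of the argument is routine.
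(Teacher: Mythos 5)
Your proposal is correct and follows essentially the same route as the paper: both reduce infinite-data MLE to maximizing $\mathbb{E}_{p(\rvx;G^0)}[\log p_\theta(\rvx;G)]$, show the ground truth attains the maximum (you via the KL decomposition, the paper via the elementary bound $\log t \le t-1$, which is the same non-negativity-of-KL fact), use correct specification to conclude $p_{\theta^*}(\rvx;G^*)=p(\rvx;G^0)$, and invoke structural identifiability to get $G^*=G^0$. The only cosmetic difference is that the paper passes through the law of large numbers explicitly and derives Gibbs' inequality by hand, while you work directly in the population objective and cite the standard KL argument.
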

\begin{proof}
The key idea is to show that with arbitrary $(\theta,G)$, we have the following:
\[
\lim_{N\rightarrow \infty}\frac{1}{N}\sum_{i=1}^N\log p_{\theta}(\rvx_i;G)\leq \lim_{N\rightarrow \infty}\frac{1}{N}\sum_{i=1}^N\log p(\rvx_i;G^0)
\]
By law of large numbers, we have
\[
\lim_{N\rightarrow \infty} \frac{1}{N}\sum_{i=1}^N\log p_{\theta}(\rvx_i;G)=\mathbb{E}_{p(\rvx;G^0)}\left[\log p_{\theta}(\rvx;G)\right].
\]
Then, we can show
\begin{align*}
    &\mathbb{E}_{p(\rvx;G^0)}\left[\log p_{\theta}(\rvx;G)\right]-\mathbb{E}_{p(\rvx;G^0)}\left[\log p(\rvx;G^0)\right]\\
    =&\mathbb{E}_{p(\rvx;G^0)}\left[\log \frac{p_\theta(\rvx;G)}{p(\rvx;G^0)}\right]\\
    \leq&\mathbb{E}_{p(\rvx;G^0)}\left[\frac{p_\theta(\rvx;G)}{p(\rvx;G^0)}-1\right]=\int p_\theta(\rvx;G)d\rvx-1=0
\end{align*}
where the inequality is due to $\log t\leq t-1$. 
With assumption 3--4, we know there are no latent confounders and the model is correctly specified. Then, the above equality holds when $(\theta^*,G^*)$ induces the same join likelihood $p(\rvx;G^0)$. Since the model is structural identifiable, we must have $G^*=G^0$. 
\end{proof}

\subsection{DECI Recovers the Ground Truth}
To show that DECI can indeed recover the ground truth by maximizing the ELBO, we first introduce an important lemma showing the KL regularizer $\mathrm{KL}[q_\phi(G)\Vert p(G)]$ is negligible in the infinite data limit. 
\begin{lemma}\label{lemma: prior negeligible}
Assume a variational distribution $q_\phi(G)$ over a space of graphs $\mathcal{G}_\phi$, where each graph $G\in \mathcal{G}_\phi$ has a non-zero associated weight $w_\phi(G)$. With the soft prior $p(G)$ defined as \cref{eq: soft prior} and bounded $\lambda,\rho,\alpha$, we have
\begin{equation}
    \lim_{N\rightarrow \infty}\frac{1}{N}\mathrm{KL}[q_\phi(G)\Vert p(G)] = 0.
\end{equation}
\end{lemma}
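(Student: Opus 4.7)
The plan is to show that $\mathrm{KL}[q_\phi(G)\Vert p(G)]$ is bounded above by a constant depending only on $D, \lambda_s, \rho, \alpha$ (and independent of the sample size $N$); dividing by $N$ and letting $N\to\infty$ then immediately yields the claim. To that end, I would decompose
\begin{equation*}
\mathrm{KL}[q_\phi(G)\Vert p(G)] = -H(q_\phi) + \mathbb{E}_{q_\phi}[-\log p(G)],
\end{equation*}
and bound each of the two terms by a finite constant. The key structural observation is that $q_\phi$ and $p$ are both distributions over the \emph{finite} set of binary adjacency matrices on $D$ nodes, so no infinities can appear provided the exponents in $p(G)$ are themselves bounded.

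For the entropy term, since $\mathcal{G}_\phi$ has cardinality at most $2^{D(D-1)}$, we get $0\leq H(q_\phi)\leq D(D-1)\log 2$, which contributes a constant. For the cross-entropy term, I would expand
\begin{equation*}
-\log p(G) = \log Z + \lambda_s \Vert G\Vert_F^2 + \rho\, h(G)^2 + \alpha\, h(G),
\end{equation*}
where $Z=\sum_{G}\exp(-\lambda_s\Vert G\Vert_F^2-\rho\, h(G)^2-\alpha\, h(G))$. Each summand can be bounded explicitly using that $G$ is binary: $\Vert G\Vert_F^2$ just counts edges, so $\Vert G\Vert_F^2\leq D^2$; and because $G\odot G=G$, we have $h(G)=\mathrm{tr}(e^{G})-D\leq D\,e^{D}-D$, so $h(G)^2$ is likewise bounded by a function of $D$ only. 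Since $\lambda_s,\rho,\alpha$ are bounded by hypothesis, these terms contribute a constant to $\mathbb{E}_{q_\phi}[-\log p(G)]$.

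The remaining piece is $\log Z$, which I would bound by sandwiching $Z$: it is a sum of at most $2^{D(D-1)}$ exponentials whose exponents lie in a bounded interval (by the same argument as above), so $Z$ is bounded away from $0$ and $\infty$ by constants depending only on $D,\lambda_s,\rho,\alpha$. Combining all three contributions yields $\mathrm{KL}[q_\phi(G)\Vert p(G)]\leq C(D,\lambda_s,\rho,\alpha)<\infty$, and dividing by $N$ completes the proof.

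I do not expect any substantive obstacle here; the argument is essentially a ``finite domain + bounded potentials'' observation. The only point requiring a little care is the bound on $h(G)$: one must remember that in the variational posterior $G$ is sampled as a binary matrix (so $G\odot G = G$), making $\mathrm{tr}(e^G)$ and hence $h(G)$ trivially bounded. If one instead allowed continuous relaxations of $G$ with unbounded entries, the $e^{G\odot G}$ term could blow up and a different argument would be needed; but that situation does not arise under the Bernoulli parametrization of $q_\phi$ used by DECI.
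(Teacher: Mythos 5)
Your proposal is correct and follows essentially the same route as the paper's proof: both arguments show that $\mathrm{KL}[q_\phi(G)\Vert p(G)]$ is bounded by an $N$-independent constant, using the finiteness of the space of binary adjacency matrices together with the boundedness of $\Vert G\Vert_F^2$, $h(G)$ (via the matrix-exponential series for binary $G$) and $\log Z$, and then divide by $N$. Your packaging via the entropy/cross-entropy split and the bound $H(q_\phi)\leq D(D-1)\log 2$ is a slightly cleaner way to organize the same estimate (the paper instead expands the KL sum directly and bounds $\log w_\phi(G)$ term by term), but the substance is identical.
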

\begin{proof}
First, we write down the definition of KL divergence
\begin{equation*}
    KL[q_\phi(G)\Vert p(G)]
    =\sum_{G\in \mathcal{G}_\phi}w_\phi(G)\left[\log w_\phi(G)+\lambda \Vert G\Vert_F^2+\rho h(G)^2+\alpha h(G)
    +\log Z\right]
\end{equation*}
where $Z$ is the normalizing constant for the soft prior. From the definition and assumptions, it is trivial to know that $\log w_\phi(G)$, $\lambda\Vert G\Vert_F^2$ are bounded for all $G\in \mathcal{G}_\phi$. In the following, we show that $h(G)$ and $\log Z$ are also bounded. 

From the definition of the DAG penalty, we have $h(G)=tr(\exp(G\odot G))-D$. The matrix exponential is defined as 
\begin{align*}
    \mathrm{tr}(\exp(G\odot G))
    &=\sum_{k=0}^\infty \frac{1}{k!}\mathrm{tr}((G\odot G)^k)\\
    &=\sum_{k=0}^\infty \frac{1}{k!}\mathrm{tr}((G)^k)\\
    &=\sum_{k=0}^D \frac{1}{k!}tr((G)^k)
\end{align*}
where the second equality is due to the fact that $G$ is a binary adjacency matrix. From \citet{zheng2018notears}, we know that $\mathrm{tr}(G^k)$ counts for the number of closed loops with length $k$. Since the graph has finite number of nodes, the longest possible closed loop is $D$, resulting in the third equality. 

Thus, it is obvious that for any $k$, the number of closed loops with length $k$ must be finite. Hence, it is trivial that $h(G)<\infty$. 
Therefore, with bounded $\lambda,\rho,\alpha$, the un-normalized soft prior 
\[
|\exp(-\lambda\Vert G\Vert_F^2-\rho h(G)^2-\alpha h(G))|<\infty.
\]
Thus, the normalizing constant $Z$ must be finite since there are only finite number of possible graphs. 

Therefore, these must exists a constant $M_{\phi,G}$ such that $\log w_\phi(G)+\lambda \Vert G\Vert_F^2+\rho h(G)^2+\alpha h(G)+\log Z<M_{\phi,G}$.
Hence, we have
\begin{equation*}
    0\leq \mathrm{KL}[q_\phi(G)\Vert p(G)]
    <\sum_{G\in \mathcal{G}_\phi}w_\phi(G)M_{G,\phi}\leq \sqrt{\sum_{G\in\mathcal{G}_\phi}w^2_\phi(G)}\sqrt{\sum_{G\in\mathcal{G}_\phi}M^2_{G,\phi}}
    <\infty.
\end{equation*}
Thus, we have 
\[
\lim_{N\rightarrow \infty}\frac{1}{N}\mathrm{KL}[q_\phi(G)\Vert p(G)] = 0
\]
where the third inequality is obtained by using Cauchy-Schwarz inequality. 
\end{proof}

\newtheorem*{T1}{Theorem~\ref{thm: consistency of DECI}}
Now, we can prove that DECI can recover the ground truth. Recalling \Cref{thm: consistency of DECI},
\begin{T1}[DECI recovers the true distribution]
Assuming assumptions 1--5 are satisfied, the solution $(\theta',q'_\phi(G))$ from maximizing ELBO (\cref{eq:DECIELBO}) in the infinite data limit satisfies $q'_\phi(G)=\delta(G=G')$ where $G'$ is a unique graph. In particular, we have $G'=G^0$ and $p_{\theta'}(\rvx;G')=p(\rvx;G^0)$. 
\end{T1}
\begin{proof}
In terms of optimization, it is equivalent to re-write the ELBO (\cref{eq:DECIELBO}) as
\[
\frac{1}{N}\mathbb{E}_{q_\phi}\left[\log p_\theta(\rvx_1,\ldots,\rvx_N)\right]-\frac{1}{N}KL\left[q_\phi(G)\Vert p(G)\right].
\]
Now, under the infinite data limit and the definition of $q_\phi$, we have
\begin{align*}
&\lim_{N\rightarrow \infty}\frac{1}{N}\mathbb{E}_{q_\phi}\left[\log p_\theta(\rvx_1,\ldots,\rvx_N)\right]-\frac{1}{N}KL\left[q_\phi(G)\Vert p(G)\right]\\
    =&\lim_{N\rightarrow\infty}\frac{1}{N}\sum_{G\in\mathcal{G}_\phi}w_\phi(G)\log p_\theta(\rvx_1,\ldots,\rvx_N|G)-\frac{1}{N}KL[q_\phi(G)\Vert p(G)]\\
    =&\lim_{N\rightarrow\infty}\frac{1}{N}\sum_{i=1}^N\sum_{G\in\mathcal{G}_\phi}w_\phi(G)\log p_\theta(\rvx_i|G)\\
    =&\int p(\rvx;G^0)\sum_{G\in \mathcal{G}_\phi}w_\phi(G)\log p_\theta(\rvx|G)d\rvx,
\end{align*}
where the second and third equalities are from \Cref{lemma: prior negeligible} and the law of large numbers, respectively. 
Let $(\theta^*,G^*)$ be the solutions from MLE (\Cref{prop: MLE recovers ground truth}). Then, since $\sum_{G\in\mathcal{G}_\phi} w_\phi(G)=1$, $w_\phi(G)>0$, we have
\begin{align*}
    \sum_{G\in\mathcal{G}_\phi}w_\phi(G)\mathbb{E}_{p(\rvx;G^0)}\left[\log p_\theta(\rvx|G)\right]\leq \mathbb{E}_{p(\rvx;G^0)}\left[\log p_{\theta^*}(\rvx;G^*)\right]
\end{align*}
with the equality holding when every graph $G\in\mathcal{G}_\phi$ and associated parameter $\theta_G$ satisfies 
\begin{equation}
    \mathbb{E}_{p(\rvx;G^0)}\left[\log p_{\theta_G}(\rvx|G)\right] = \mathbb{E}_{p(\rvx;G^0)}\left[\log p_{\theta^*}(\rvx|G^*)\right].
\label{eq:1}
\end{equation}
From proposition \ref{prop: MLE recovers ground truth}, under correctly specified model, we have
\[
\mathbb{E}_{p(\rvx;G^0)}\left[\log p_{\theta^*}(\rvx|G^*)\right] = \mathbb{E}_{p(\rvx;G^0)}\left[\log p(\rvx;G^0)\right]
\]
Thus, for a $G'\in\mathcal{G}_\phi$ and associated parameter $\theta'$, the condition in \cref{eq:1} becomes
\begin{align*}
    &\mathbb{E}_{p(\rvx;G^0)}\left[\log p_{\theta'}(\rvx|G')\right] = \mathbb{E}_{p(\rvx;G^0)}\left[\log p(\rvx|G^0)\right]\\
    \Longrightarrow& \mathbb{E}_{p(\rvx;G^0)}\left[\log \frac{p_{\theta'}(\rvx;G')}{p(\rvx;G^0)}\right]=0\\
    \Longrightarrow& KL[p(\rvx;G^0)\Vert p_{\theta'}(\rvx;G')]=0,
\end{align*}
which implies $p_{\theta'}(\rvx;G')=p(\rvx;G^0)$. Since DECI is structural identifiable, this means $G' = G^0$ and it is unique. Thus, the graph space $\mathcal{G}_\phi$ only contains one graph $G'$, and $q'_\phi(G)=\delta(G=G')$.
\end{proof}
One should note that we do not explicitly restrict the noise distribution, indicating it still holds with the spline noise (\Cref{sec:deciforthewin}). However, the above theorem implicitly assumes that DECI is a special case of ANM for structural indentifiability and that the data has no missing values. Thus, it is not applicable for DECI with the mixed-type and missing value extensions. We leave a more general theoretical guarantee to future work.

\section{Additional Details for DECI} \label{sec:DECIdets}

\subsection{Optimization Details for Causal Discovery} \label{app:algorithm}

As mentioned in the main text, we gradually increase the values of $\rho$ and $\alpha$ as optimization proceeds, so that non-DAGs are heavily penalized. Inspired by \notears, we do this with a method that resembles the updates used by the augmented Lagrangian procedure for optimization \citep{nemirovsky1999optimization}. The optimization process interleaves two steps: (i) Optimize the objective for fixed values of $\rho$ and $\alpha$ for a certain number of steps; and (ii) Update the values of the penalty parameters $\rho$ and $\alpha$. The whole optimization process involves running the sequence (i)--(ii) until convergence, or until the maximum allowed number of optimization steps is reached.

\textbf{Step (i).} Optimizing the objective for some fixed values of $\rho$ and $\alpha$ using Adam \citep{kingma2014adam}. We optimize the objective for a maximum of $6000$ steps or until convergence, whichever happens first (we stop early if the loss does not improve for $1500$ optimization steps. If so, we move to step (ii)). We use Adam, initialized with a step-size of $0.01$. During training, we reduce the step-size by a factor of $10$ if the training loss does not improve for $500$ steps. We do this a maximum of two times. If we reach the condition a third time, we do not decrease the step-size and assume optimization has converged, and move to step (ii).

\textbf{Iterating (i)--(ii).} We initialize $\rho=1$ and $\alpha=0$. At the beginning of step (i) we measure the DAG penalty $P_1 = \mathbb{E}_{q_\phi(G)} h(G)$. Then, we run step (i) as explained above. At the beginning of step (ii) we measure the DAG penalty again, $P_2 = \mathbb{E}_{q_\phi(G)} h(G)$. If $P_2 < 0.65 \, P_1$, we leave $\rho$ unchanged and update $\alpha \leftarrow \alpha + \rho \, P_2$. Otherwise, if $P_2 \geq 0.65 \, P_1$, we leave $\alpha$ unchanged and update $\rho \leftarrow 10\,\rho$. We repeat the sequence (i)--(ii) for a maximum of $100$ steps or until convergence (measured as $\alpha$ or $\rho$ reaching some max value which we set to $10^{13}$ for both), whichever happens first. %

\subsection{Other Hyperparameters.} 

We use $\lambda_s = 5$ in our prior over graphs \cref{eq: soft prior}.
For ELBO MC gradients we use the Gumbel softmax method with a hard forward pass and a soft backward pass with temperature of $0.25$.

The functions \cref{eq:fcause_functions} used in DECI's SEM, $\zeta$ and $ \ell$, are 2 hidden layer MLPs with 128 hidden units per hidden layer. These MLPs use residual connections and layer-norm at every hidden layer.

For the non-Gaussian noise model in \cref{eq:noise_model}, the bijection $\kappa$ is an 8 bin rational quadratic spline \citep{Durkan2019spline} with learnt parameters. 

In \cref{sec:deci_ate}, for ATE estimation we compute expectations by drawing 1000 graphs from DECI's graph posterior $q_\phi$ and for each graph we draw 2 samples of $\rvx_{Y}$ for a total of 2000 samples.
For CATE estimation, we need to train a separate surrogate predictor per graph samples. We draw 10 different graph samples and 10000 $(\rvx_{C}, \rvx_{Y})$ pair samples for each graph. We use these to train the surrogate models.

Our surrogate predictor is a basis function linear model with 3000 random Fourier features drawn such that the model approximates a Gaussian process with a radial basis function kernel of lengthscale equal to $1$ \citep{Felix2016orthogonal}.

\subsection{ELBO Derivation} \label{sec:elboderivation}

The goal of maximum likelihood involves maximizing the likelihood of the observed variables. For DECI (with fully observed datasets) this corresponds to the log-marginal likelihood
\begin{equation}
    \log p_\theta(x^1, \hdots, x^N) = \log \sum_A p(G) \prod_n p_\theta(x^n|G). \label{eq:appml}
\end{equation}
Marginalising $G$ in the equation above is intractable, even for moderately low dimensions, since the number of terms in the sum grows exponentially with the size of $G$ (which grows quadratically with the data dimensionality $D$).

Variational inference proposes to use a distribution $q_\phi(G)$ to build the ELBO, a lower bound of the objective from \cref{eq:appml}, as follows:
\begin{align}
    \log p_\theta(x^1, \hdots, x^N) & = \log \sum_G p(G) \prod_n p_\theta(x^n|G)\\
    & = \log \sum_G q_\phi(G) \frac{p(G) \prod_n p_\theta(x^n|G)}{q_\phi(G)}\\
    & = \log \mathbb{E}_{q_{\phi}(G)} \left[\frac{p(G) \prod_n p_\theta(x^n|G)}{q_\phi(G)} \right]\\
    & \geq \mathbb{E}_{q_{\phi}(G)} \left[\log \frac{p(G) \prod_n p_\theta(x^n|G)}{q_\phi(G)} \right] & \mbox{(Jensen's inequality)}\\
    & = \mathbb{E}_{q_{\phi}(G)} \left[\log p(G) \prod_n p_\theta(x^n|G) \right] + H(q_\phi)\\
    & = \mathrm{ELBO}(\phi, \theta),
\end{align}

where we denote $H(q_\phi) = -\mathbb{E}_{q_\phi(G)} \log q_\phi(G)$ for the entropy of the distribution $q_\phi$. Interestingly, the distribution $q_\phi$ that maximizes the ELBO is exactly the one that minimizes the KL-divergence between the approximation and the true posterior, $\mathrm{KL}(q_\phi(G) \Vert p_\theta(G\vert x^1\hdots, x^N))$ (see, e.g.~\citet{blei2017variational}). This is why $q_\phi$ can be used as a posterior approximation.

\subsection{Intervened Density Estimation with DECI}

Apart from (C)ATE estimation, DECI may also be used to evaluate densities under intervened distributions. For a given graph, the density of some observation vector $\rva$ is computed by evaluating the base distribution density after inverting the SEM
\begin{gather}
    p_{\theta}(\rvx = \rva|G^{m}) = \prod_{i} p(\rvz_{i}= (\rva_{i} - f_{i}(\rva_{\Pa{i; G^{m}}}))) 
\end{gather}
noting that the transformation Jacobian is the identity. We then marginalise the graphs using Monte Carlo:
\begin{gather}\label{eq:MC_graph_marginalisation}
    p_{\theta}(\rvx = \rva ) \approx \frac{1}{M}\sum_{m}^{M} p_{\theta}(\rvx= \rva | G^{m}); \quad G^{m} \sim q_{\phi}(G).
\end{gather}
In the rest of this section we derive methods that allow using DECI to estimate causal quantities.

Under $G_{\Do{\rvx_{T}}}$, $i \in T$ correspond to parent nodes and we have the following factorisation: $p(\rvx | G_{\Do{\rvx_{T}}}) = p(\rvx_{\setminus{T}} | G_{\Do{\rvx_{T}}}) \prod_{i \in T} p(\rvx_{i})$. We can then evaluate the interventional density of an observation $\rvx_{\setminus T}=\rva$ with DECI as
\begin{align}
     p_{\theta}&(\rvx_{\setminus T}=\rva | \text{do}(\rvx_{T}=\rvb), G^{m}) \notag \\ &= \frac{p_{\theta}(\rvx_{\setminus T}=\rva, \rvx_{T}=\rvb | G^{m}_{\Do{\rvx_{T}}})}{p_{\theta}(\rvx_{T}=\rvb | G^{m}_{\Do{\rvx_{T}}} )} \notag \\
     &= \frac{p_{\theta}(\rvx_{\setminus T}=\rva | \rvx_{T}=\rvb, G^{m}_{\Do{\rvx_{T}}})p_{\theta}(\rvx_{T}=\rvb )}{p_{\theta}(\rvx_{T}=\rvb )} \notag \\
     &= \prod_{j \in \setminus{T}} p(\rvz_{i}= (\rva_{i} - f_{i}(\rva_{\Pa{i; G^{m}_{\Do{\rvx_{T}}}}}))),
\end{align}
which amounts to evaluating the density of the exogenous noise correspondint to non-intervened variables. We can then marginalise the graph using Monte Carlo as in \cref{eq:MC_graph_marginalisation}.

\subsection{Relationship with \citet{khemakhem2021causal}}
\label{app:causalflow}

\citet{khemakhem2021causal} introduced \carefl, a method that uses autoregressive flows \citep{ huang2018neural, kingma2016improved} to learn causal-aware models, using the variables' causal ordering to define the autoregressive transformations. The method's main benefit is its ability to model complex nonlinear relationships between variables. However, \carefl\ alone is insufficient for causal discovery, as it requires the causal graph structure as an input.
 The authors propose a two-step approach. First, run a traditional constraint-based method (e.g., PC) to find the graph's skeleton and orient as many edges as possible, and second, fit several flow models to determine the orientation of the remaining edges. The drawbacks of this approach include the dependence on an external causal discovery methods (which will inherently limit \carefl's performance to that of the method used), and the cost of fitting multiple flow models to orient the edges that are left unoriented after the first step. 
 Our method extends \citet{khemakhem2021causal} to learn the causal graph among multiple variables and perform end-to-end causal inference. 

\subsection{Discussion on Causal Discovery Methods }
\label{app:generalECI_discussion}
When performing causal discovery, DECI returns a posterior over graphs. Most other causal discovery methods return either a single graph or an equivalence class of graphs. However, we can re-cast these methods in the probabilistic framework used by DECI by noting that a posterior over graphs takes the form
\begin{gather}\label{eq:graph_posterior}
    p(G|\rmX) = \frac{p(\rmX | G) p(G)}{\sum_{G} p(\rmX | G) p(G)}.
\end{gather}
In this equation, the likelihood measures the degree of compatibility of a certain DAG architecture with the observed data. For score-based discovery methods \citep{chickering2002optimal,chickering2015selective, chickering2020statistically,huang2018generalized} we take the score to be $\log p(\rmX | G)$. For functional discovery methods \citep{hoyer2008nonlinear, shimizu2006linear, zhang2009identifiability} we use the exogenous variable log-density. Constraint-based methods \citep{spirtes1991algorithm, spirtes2000causation} can also be cast in this light by assuming a uniform distribution over all graphs in their outputted equivalence class $\mathcal{G}$: $\log p(\rmX | G) = -\log |\mathcal{G}|,\forall \,G \in \mathcal{G}$. 
To what degree these methods succeed at constraining the space of possible graphs will depend on how well their respective assumptions are met and the amount of data available \citep{Hoyer2008discovery}.

\section{Unified View of Causal Discovery Methods} \label{app:unified_flow}

This section introduces a simple analysis showing that, similarly to DECI, most causal discovery methods based on continuous optimization can be framed from a probabilistic perspective as fitting a flow. The benefits of this unified perspective are twofold. First, it allows a simple comparison between methods, shedding light on the different assumptions used by each one, their benefits and drawbacks. Second, it simplifies the development of new tools to improve these methods, since any improvements to one of them can be easily mapped to the others by framing them in this unified framework (e.g. our extensions to handle missing values and flexible noise distributions can be easily integrated with \notears).

The connection between causal discovery methods based on continuous optimization and flow-based models uses the concept of a weighted adjacency matrix $\wt\in \mathbb{R}^{D\times D}$ linked to a function $f(\rvx; \theta): \mathbb{R}^D \rightarrow \mathbb{R}^D$. Loosely speaking, these matrices can be seen as characterizing how likely is each output of $f(\rvx; \theta)$ to depend on each component of the input $\rvx$. For instance, $\wt_{j, i} = 0$ indicates that $f_i(\rvx;\theta)$ is completely independent of $x_j$. Such adjacency matrices can be constructed efficiently for a wide range of parameterizations for $f$, such as multi layer perceptrons and weighted combinations of nonlinear functions. We refer the reader to \citet{zheng2020learning} for details.

\begin{lemma} \label{lemma:flowobj}
Let $f(\rvx; \theta): \mathbb{R}^D \rightarrow \mathbb{R}^D$ be a $\theta$-parameterized function with weighted adjacency matrix $\wt\in \mathbb{R}^{D\times D}$. Given a dataset $\{\rvx^1, \hdots, \rvx^N\}$, fitting a flow with the transformation $\rvz = \rvx - f(\rvx; \theta)$, base distribution $p_\rvz$ and a hard acyclicity constraint on $\wt$ is equivalent to solving
\begin{equation}
\max_\theta \sum_{n=1}^N \log p_\rvz(\rvx^n - f(\rvx^n;\theta)) \quad \mathrm{s.t.} \quad h(\wt) = 0, \label{eq:flowobj}
\end{equation}
where $h(\cdot)$ is the algebraic characterization of DAGs from \cref{eq:dagness}.
\end{lemma}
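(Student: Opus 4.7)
The plan is to reduce the flow change-of-variables formula to the stated maximum-likelihood objective by showing that the log-Jacobian-determinant term vanishes under the hard acyclicity constraint $h(\wt)=0$. I would start by writing the flow log-likelihood in the standard form
\begin{equation*}
\log p_\rvx(\rvx^n;\theta) = \log p_\rvz(\rvx^n - f(\rvx^n;\theta)) + \log\left|\det\left(I - \tfrac{\partial f}{\partial \rvx}(\rvx^n;\theta)\right)\right|,
\end{equation*}
which follows because the transformation $\rvz = \rvx - f(\rvx;\theta)$ has Jacobian $I - \partial f/\partial \rvx$. Summing over $n=1,\dots,N$ gives the full dataset log-likelihood. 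If I can show that the Jacobian term is identically zero whenever $\wt$ is acyclic, then the flow MLE problem collapses exactly onto \cref{eq:flowobj}, with matching constraints on both sides.

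The key step is to tie the Jacobian's sparsity pattern to the structure encoded by $\wt$. By the defining property of the weighted adjacency matrix, $\wt_{j,i}=0$ implies $\partial f_i/\partial x_j \equiv 0$ on all of $\mathbb{R}^D$. Hence the support of $\partial f/\partial \rvx$ (as a $D\times D$ array of functions of $\rvx$) is contained, uniformly in $\rvx$, in the support of $\wt^\top$. By \citet{zheng2018dags}, $h(\wt)=0$ is equivalent to $\wt$ being the adjacency matrix of a DAG, so there exists a single permutation $\pi$ of $\{1,\dots,D\}$ --- a topological ordering of that DAG --- such that after reindexing coordinates by $\pi$ the matrix $\partial f/\partial \rvx$ is strictly lower triangular at every $\rvx$. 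Consequently $I - \partial f/\partial \rvx$, under the same simultaneous row/column permutation, is lower triangular with all diagonal entries equal to one.

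Since simultaneous row/column permutation leaves the determinant unchanged, this yields $\det(I - \partial f/\partial \rvx) = 1$ pointwise, so the log-Jacobian term vanishes and the two optimization problems coincide (objective and constraint alike). The main subtlety I expect is making the ``single global permutation'' argument airtight: a priori one might worry the triangularization could depend on $\rvx$, but because the sparsity of the Jacobian is dominated \emph{uniformly} by the fixed matrix $\wt$, a single topological ordering works for every $\rvx$ simultaneously. A secondary, essentially bookkeeping, point is matching the index convention between $\wt_{j,i}$ (encoding the edge $j\to i$) and the Jacobian entry $(\partial f/\partial \rvx)_{i,j}=\partial f_i/\partial x_j$, which only determines whether the triangularization is lower or upper but does not affect the unit determinant.
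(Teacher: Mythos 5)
Your proposal is correct and follows essentially the same route as the paper's proof: apply the change-of-variables formula for the flow $\rvz = \rvx - f(\rvx;\theta)$ and observe that the log-Jacobian-determinant term vanishes under the acyclicity constraint, so the flow MLE reduces to the stated constrained problem. The only difference is that you spell out the unit-determinant step (uniform sparsity domination by $\wt$, a single topological permutation, strict triangularity of $\partial f/\partial \rvx$), which the paper simply asserts as a consequence of the constraint, citing \citet{mooij2011causal} in the main text.
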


\begin{proof}
The acyclicity constraint is enforced by constraining the optimization domain to $\Theta = \{\theta: h(\wt) = 0\}$. Then, the maximum likelihood objective can be written as
\begin{align}
    \sum_n \log p_\theta(\rvx^n) & = \sum_n \log p_z(\rvx^n - f(\rvx^n; \theta)) + \log \left|\det \frac{\mathrm{d}  (\rvx^n - f(\rvx^n;\theta))}{\mathrm{d} \rvx^n}\right| \label{eq:prop1}\\
    & = \sum_n \log p_z(\rvx^n -  f(\rvx^n;\theta)), 
    \label{eq:prop2}
\end{align}
where the first equality we use the change of variable formula, valid because the transformation $\rvz = g(\rvx;\theta) = \rvx-f(\rvx;\theta)$ is invertible for any $\theta\in \Theta$, and the second equality uses that the function $f(\rvx^n;\theta)$ has Jacobian-determinant equal to $1$, due to the constraint $\Theta = \{\theta: h(\wt) = 0\}$.
\end{proof}

\Cref{lemma:flowobj} is the main building block in the formulation of continuous optimization-based causal discovery methods from a probabilistic perspective as fitting flow models. This is simply because the objective used by each of these methods can be exactly recovered from \cref{eq:flowobj} with specific choices for $f(\rvx; \theta)$ and $p_\rvz$.
\begin{description}[topsep=-3pt]
    \setlength\itemsep{0em}
	\item[\notears\ \citep{zheng2018dags}] uses a standard isotropic Gaussian for $p_\rvz$ and a linear transformation for $f(\rvx, \theta)$. (This is similar to DECI-Gaussian, although DECI permits fully nonlinear functions.)
    
	\item[\notearsmlp\ \citep{zheng2020learning}] uses a standard isotropic Gaussian for $p_\rvz$ and $D$ independent multi-layer perceptrons, one for each component of $f(\rvx, \theta)$.

	\item[\notearssob\ \citep{zheng2020learning}] uses a standard isotropic Gaussian for $p_\rvz$ and a weighted linear combination of nonlinear basis functions.
	
	\item[\textit{GAE} \citep{ng2019graph}] uses a standard isotropic Gaussian for $p_\rvz$ and a GNN for $f(\rvx, \theta)$.
	
	\item[\grandag\ \citep{lachapelle2019gradient}] uses a factorized Gaussian with mean zero and learnable scales for $p_\rvz$ and $D$ multi layer perceptrons, one for each component of $f(\rvx, \theta)$.
    
    \item[\golem\ \citep{ng2020role}.] This is a linear method whose original formulation was already in a probabilistic perspective, using a linear transformation for $f(\rvx; \theta)$.
\end{description}

In summary, recently proposed causal discovery methods based on continuous optimization can be formulated from a probabilistic perspective as fitting a flow with different constraints, transformations, and base distributions. This unified formulation sheds light on the assumptions done by each method (e.g. a Gaussian noise assumption, either implicitly as in \notears\ or explicitly as in \grandag) and, more importantly, simplifies the development of new tools to improve them. For instance, the ideas proposed to deal with partially-observed datasets and non-Gaussian noise are readily applicable to any of the causal discovery methods mentioned in this section, addressing some of their limitations \cite{kaiser2021unsuitability, loh2014high,  Reisach2021beware}.

\section{Datasets Details} \label{app:benchmark} 

Our two benchmark datasets are constructed following similar procedures described in \citet{louizos2017causal}.

\textbf{IHDP \citep{hill2011bayesian}.} This dataset contains measurements of both infants (birth weight, head circumference, etc.) and their mother (smoked cigarettes, drank alcohol, took drugs, etc) during real-life data collected in a randomized experiment. The main task is to estimate the effect of home visits by specialists on future cognitive test scores of infants. The outcomes of treatments are simulated artificially as in \citep{hill2011bayesian}; hence the outcomes of both treatments (home visits or not) on each subject are known. Note that for each subject, our models are only exposed to only one of the treatments; the outcomes of the other potential/counterfactual outcomes are hidden from the mode, and are only used for the purpose of ATE/CATE evaluation. To make the task more challenging, additional confoundings are manually introduced by removing a subset (non-white mothers) of the treated children population. In this way we can construct the IHDP dataset of 747 individuals with 6 continuous covariates and 19 binary covariates. We use 10 replicates of different simulations based on setting B (log-linear response surfaces) of \citep{hill2011bayesian}, which can downloaded from \url{https://github.com/AMLab-Amsterdam/CEVAE}. We use a 70\%/30\% train-test split ratio.  Before training our models, all continuous covariates are normalized. 

\textbf{TWINS \citep{almond2005costs}.} This dataset consists of twin births in the US between 1989-1991. Only twins which with the same sex born weighing less than 2kg are considered. The treatment is defined as being born as the heavier one in each twins pair, and the outcome is defined as the mortality of each twins in their first year of life. Therefore, by definition for each pair of twins, we can observe the outcomes of both treatments (the lighter twin and heavier twin). However, during training, only one of the treatment is visible to our models, and the other potential outcome is unknown to the model and are only used for evaluation. The raw dataset is downloaded from \url{https://github.com/AMLab-Amsterdam/CEVAE}. Following \citet{louizos2017causal}, we also introduce artificial confounding using the categorical \texttt{GESTAT10} variable. This is done by assigning treatments (factuals) using the conditional probability $t_i|\mathbf{x}_i, z_i = \mathrm{Bern}(\sigma(w_0^T\mathbf{x}_i + w_h (z_i/10 - 0.1)))$, where $t_i$ is the treatment assignment for subject $i$, $z_i$ is the corresponding \texttt{GESTAT10} covariate, $\mathbf{x}_i$ denotes the other remaining covariates. Both $w_0$ and $w_h$ are randomly generated as $w_0 \sim 
\mathcal{N}(0, 0.1*I)$, $w_h \sim \mathcal{N}(5,0.1)$. All continuous covariates are normalized.  

\textbf{Ground Truth ATE and CATE Estimation for TWINS and IHDP.}  In both benchmark datasets, since the held-out hypothetical outcomes of counterfactual treatments are already known, the the ground truth ATE can be naively estimated by averaging the difference between the factual and counterfactual outcomes across the entire dataset. The CATE estimation is a bit tricky, since both datasets contains covariates collected from real-world experiments, in which the underlying ground truth causal graph structure is unknown. As a result, exact CATE estimation is generally impossible for continuous conditioning sets. Therefore, when evaluating the CATE estimation performance on \textbf{TWINS} and \textbf{IHDP}, we focus only on discrete variables (binary and categorical) as conditioning set. This allows unbiased estimation of ground truth CATE by simply averaging the treatment effects on subgroups of subjects in the dataset, that have the corresponding discrete value in the conditioning set. We consider only single conditioning variable at a time, and estimate the corresponding CATE for evaluation.

\subsection{CSuite}
\label{sec:app:csuite}

We develop Causal Suite (CSuite), a number of small to medium (2--12 nodes) synthetic datasets generated from hand-crafted Bayesian networks with the intention of testing different capabilities of causal discovery and inference methods. All continuous-only datasets take the form of additive noise models.

Each dataset comes with a training set of 2000 samples, and between 1 and 2 intervention test sets. Each intervention test set has a treatment variable, treatment value, reference treatment value and effect variable. We estimate the ground truth ATE by drawing 2000 samples from the treated and reference intervened distributions. For the datasets used to evaluate CATE, we generate samples from \emph{conditional} intervened distributions by using Hamiltonian Monte Carlo. We employ a burn-in of 10k steps and a thinning factor of 5 to generate 2000 conditional samples, which we then use to compute our ground truth CATE estimate. We note that because all ground truth causal quantities are estimated from samples, there is a lower bound on the expected error that can be obtained by our methods. When methods obtain an error equal or lower we say that they have solved the task.

\paragraph{lingauss} A two node graph (\Cref{fig:two_node_pgm}) with a linear relationship and Gaussian noise. We have $X_1\sim N(0,1)$ and $X_2 = \tfrac{1}{2}X_1 + \tfrac{\sqrt{3}}{2}Z_2$ where $Z_2 \sim N(0,1)$ is independent of $X_1$. The observational distribution is symmetrical in $X_1 \leftrightarrow X_2$. The graph is not identifiable. The best achievable performance on this dataset is obtained when there is a uniform distribution over edge direction.

\paragraph{linexp} A two node graph (\Cref{fig:two_node_pgm}) with a linear functional relationship, but with exponentially distributed additive noise.
We have $X_1\sim N(0,1)$ and $X_2 = \tfrac{1}{2}X_1 + \tfrac{\sqrt{3}}{2}(Z_2 - 1)$ where $Z_2 \sim \text{Exp}(1)$ is independent of $X_1$.
By using non-Gaussian noise, the graph becomes identifiable. However, the inference problem will be more challenging for methods sensitive to outliers, such as those that assume Gaussian noise.

\paragraph{nonlingauss} A two node graph (\Cref{fig:two_node_pgm}) with a nonlinear relationship and Gaussian additive noise. We have $X_1 \sim N(0,1)$ and $X_2 = \sqrt{6}\exp(-X_1^2) + \alpha Z_2$ where $Z_2 \sim N(0,1)$ is independent of $X_1$ and $\alpha^2 = {1 - 6\left(\frac{1}{\sqrt{5}} - \frac{1}{3}\right)}$. Note $\text{Var}(X_2)=1$ and $\text{Cov}(X_1,X_2)=0$. By having a linear correlation of zero between $X_1$ and $X_2$, this dataset creates a potential failure mode for causal inference methods that assume linearity.

\paragraph{nonlin\_simpson} A synthetic Simpson's paradox, using the graph \Cref{fig:nonlin_simpson_pgm}: if the confounding factor $X_3$ is not adjusted for, the relationship between the treatment $X_1$ and effect $X_2$ reverses.
The variable $X_4$ correlates strongly with the effect, but must not be used for adjustment.
Choosing an incorrect adjustment set when estimating $\E[X_2|\Do{X_1}]$ leads to a significantly incorrect ATE estimate.
All variables are continuous, with nonlinear structural equations and non-Gaussian additive noise.

\paragraph{symprod\_simpson} Another Simpson's paradox using the graph \Cref{fig:symprod_simpson_pgm}. This dataset is similar to nonlin\_simpson with 2 key differences: 1) the effect variable is the result of a product between the confounding variable and the treatment variable. This makes drawing causal inferences require non-linear function estimation. Additionally, the ATE is close to 0. The conditioning variable for the CATE task is a descendant of the confounding variable. 
This dataset probes for methods' capacity to reduce their uncertainty about a confounding variables based on values of its child variables.

\paragraph{large\_backdoor} A nine node graph, as shown in \Cref{fig:large_backdoor_pgm}. This dataset is constructed so that there are many possible choices of backdoor adjustment set.
While both minimal and maximal adjustment sets can result in a correct solution, the a minimal adjustment set results in a much lower-dimensional adjustment problem and thus will result in lower variance solutions. The conditioning node for the CATE task is a child of the root variable. Thus the CATE task probes for methods' capacity to infer the value of an observed confounder from one of its children.
All variables are continuous, with nonlinear structural equations and non-Gaussian additive noise.

\paragraph{weak\_arrows} A nine node graph, as shown in \Cref{fig:weak_arrows_pgm}. Unlike the previous dataset, when the true graph is known, a large adjustment set must be used. The causal discovery challenge revolves around finding all arrows, which are scaled to be relatively weak, but which have significant predictive power for $X_9$ in aggregate.
This dataset tests methods' capacity to identify the full adjustment set and adjust for a large number of variables simultaneously.

\paragraph{cat\_to\_cts} A two node (\Cref{fig:two_node_pgm}) graph with categorical $X_1$ and continuous $X_2$ with an additive noise model. We have $X_1 \sim \text{Cat}\left(\tfrac{1}{4}, \tfrac{1}{4}, \tfrac{1}{2}\right)$ takes values in $\{0,1,2\}$ and $X_2 = X_1 + \tfrac{8}{5}(s(Z_2) - 1)$ where $s(x) = \log(\exp(x) + 1)$ is the softplus function, and $Z_2 \sim N(0,1)$ is independent of $X_1$.

\paragraph{cts\_to\_cat} A two node (\Cref{fig:two_node_pgm}) graph with continuous $X_1$ and categorical $X_2$. 
We take $X_1 \sim U(-\sqrt{3},\sqrt{3})$ and $X_2$ categorical on $\{0,1,2\}$ with the following conditional probabilities
\begin{equation}
    p(X_2|X_1=x_1) = \begin{cases}
    \left(\tfrac{6}{13},\tfrac{6}{13},\tfrac{1}{13} \right) & \text{ if } x_1 < -\tfrac{\sqrt{3}}{3} \\
    \left(\tfrac{1}{8},\tfrac{3}{4},\tfrac{1}{8} \right) & \text{ if } -\tfrac{\sqrt{3}}{3} \le x_1 < \tfrac{\sqrt{3}}{3} \\
    \left(\tfrac{1}{3},\tfrac{1}{3},\tfrac{1}{3} \right) & \text{ if } x_1 > \tfrac{\sqrt{3}}{3} \\
    \end{cases}
\end{equation}
In this problem, we treat $X_2$ as the treatment and $X_1$ as the target, giving a theoretical ATE of zero.

\paragraph{mixed\_simpson} Similar to the nonlin\_simpson dataset, using the graph of \Cref{fig:nonlin_simpson_pgm}, but with $X_3$ categorical on three categories, and $X_1$ binary.

\paragraph{large\_backdoor\_binary\_t} Similar to the large\_backdoor dataset, using the graph of 
\Cref{fig:large_backdoor_pgm}, but with $X_8$ binary.

\paragraph{weak\_arrows\_binary\_t} Similar to the weak\_arrows dataset, using the graph of \Cref{fig:weak_arrows_pgm}, but with $X_8$ binary.

\paragraph{mixed\_confounding} A large, mixed type dataset with 12 variables, as shown in \Cref{fig:mixed_confounding_pgm}. In this dataset, $X_1,X_5$ are binary, $X_3, X_6,X_8$ are categorical on three categories, and other variables are continuous. We utilise nonlinear structural equations and non-Gaussian additive noise.

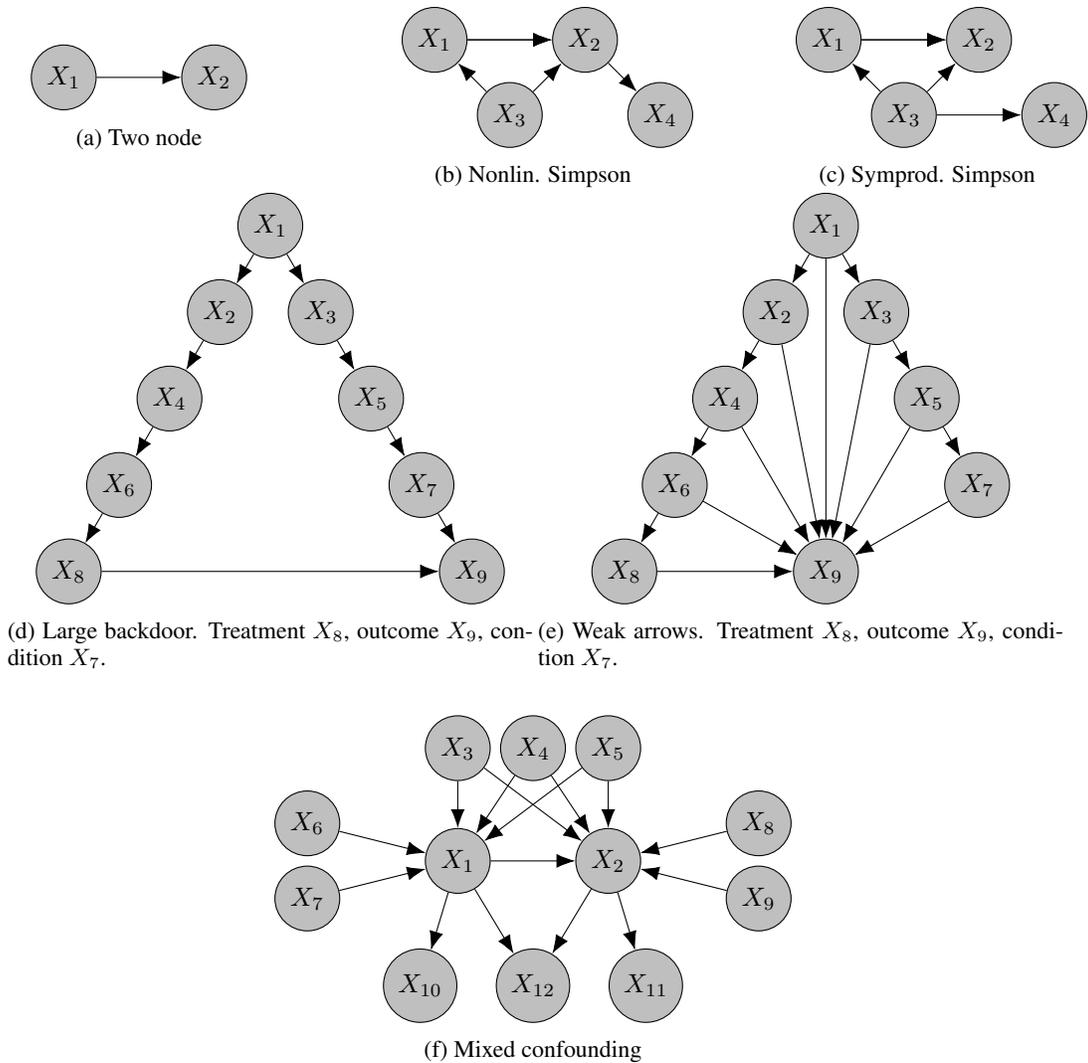
\begin{figure}
\begin{subfigure}{0.25\columnwidth}
    \centering
    \begin{tikzpicture}
	\node (y) [draw, circle, fill=lightgray] at (-1, 1) {$X_1$};
	\node (z) [draw, circle,fill=lightgray] at (1, 1) {$X_2$};
	\draw[-{Latex[length=2.5mm]}] (y) -> (z);
	\end{tikzpicture}
    \caption{Two node}\label{fig:two_node_pgm}
\end{subfigure}
\begin{subfigure}{0.25\columnwidth}
    \centering
    \begin{tikzpicture}
	\node (y) [draw, circle, fill=lightgray] at (-1, 1) {$X_1$};
	\node (z) [draw, circle,fill=lightgray] at (1, 1) {$X_2$};
	\draw[-{Latex[length=2.5mm]}] (y) -> (z);
	\node (a) [draw, circle, fill=lightgray] at (0, 0) {$X_3$};
	\node (b) [draw, circle,fill=lightgray] at (2, 0) {$X_4$};
	\draw[-{Latex[length=2.5mm]}] (y) -> (z);
	\draw[-{Latex[length=2.5mm]}] (a) -> (z);
	\draw[-{Latex[length=2.5mm]}] (a) -> (y);
	\draw[-{Latex[length=2.5mm]}] (z) -> (b);
	\end{tikzpicture}
    \caption{Nonlin. Simpson}\label{fig:nonlin_simpson_pgm}
\end{subfigure}
\begin{subfigure}{0.25\columnwidth}
    \centering
    \begin{tikzpicture}
	\node (y) [draw, circle, fill=lightgray] at (-1, 1) {$X_1$};
	\node (z) [draw, circle,fill=lightgray] at (1, 1) {$X_2$};
	\draw[-{Latex[length=2.5mm]}] (y) -> (z);
	\node (a) [draw, circle, fill=lightgray] at (0, 0) {$X_3$};
	\node (b) [draw, circle,fill=lightgray] at (2, 0) {$X_4$};
	\draw[-{Latex[length=2.5mm]}] (y) -> (z);
	\draw[-{Latex[length=2.5mm]}] (a) -> (z);
	\draw[-{Latex[length=2.5mm]}] (a) -> (y);
	\draw[-{Latex[length=2.5mm]}] (a) -> (b);
	\end{tikzpicture}
    \caption{Symprod. Simpson}\label{fig:symprod_simpson_pgm}
\end{subfigure}
\vspace{15pt}
\begin{subfigure}{0.5\columnwidth}
    \centering
    \begin{tikzpicture}
	\node (x1) [draw, circle, fill=lightgray] at (0,0) {$X_1$};
	\node (x2) [draw, circle, fill=lightgray] at (-.67, -1.15) {$X_2$};
	\node (x3) [draw, circle, fill=lightgray] at (.67, -1.15) {$X_3$};
	\node (x4) [draw, circle, fill=lightgray] at (-1.34, -2.3) {$X_4$};
	\node (x5) [draw, circle, fill=lightgray] at (1.34, -2.3) {$X_5$};
	\node (x6) [draw, circle, fill=lightgray] at (-2.01, -3.45) {$X_6$};
	\node (x7) [draw, circle, fill=lightgray] at (2.01, -3.45) {$X_7$};
	\node (x8) [draw, circle, fill=lightgray] at (-2.68, -4.6) {$X_8$};
	\node (x9) [draw, circle, fill=lightgray] at (2.68, -4.6) {$X_9$};
	\draw[-{Latex[length=2.5mm]}] (x1) -> (x2);
	\draw[-{Latex[length=2.5mm]}] (x1) -> (x3);
	\draw[-{Latex[length=2.5mm]}] (x2) -> (x4);
	\draw[-{Latex[length=2.5mm]}] (x3) -> (x5);
	\draw[-{Latex[length=2.5mm]}] (x4) -> (x6);
	\draw[-{Latex[length=2.5mm]}] (x5) -> (x7);
	\draw[-{Latex[length=2.5mm]}] (x6) -> (x8);
	\draw[-{Latex[length=2.5mm]}] (x7) -> (x9);
	\draw[-{Latex[length=2.5mm]}] (x8) -> (x9);
	\end{tikzpicture}
    \caption{Large backdoor. Treatment $X_8$, outcome $X_9$, condition $X_7$.}\label{fig:large_backdoor_pgm}
\end{subfigure}
\begin{subfigure}{0.5\columnwidth}
    \centering
    \begin{tikzpicture}
	\node (x1) [draw, circle, fill=lightgray] at (0,0) {$X_1$};
	\node (x2) [draw, circle, fill=lightgray] at (-.67, -1.15) {$X_2$};
	\node (x3) [draw, circle, fill=lightgray] at (.67, -1.15) {$X_3$};
	\node (x4) [draw, circle, fill=lightgray] at (-1.34, -2.3) {$X_4$};
	\node (x5) [draw, circle, fill=lightgray] at (1.34, -2.3) {$X_5$};
	\node (x6) [draw, circle, fill=lightgray] at (-2.01, -3.45) {$X_6$};
	\node (x7) [draw, circle, fill=lightgray] at (2.01, -3.45) {$X_7$};
	\node (x8) [draw, circle, fill=lightgray] at (-2.68, -4.6) {$X_8$};
	\node (x9) [draw, circle, fill=lightgray] at (0, -4.6) {$X_9$};
	\draw[-{Latex[length=2.5mm]}] (x1) -> (x2);
	\draw[-{Latex[length=2.5mm]}] (x1) -> (x3);
	\draw[-{Latex[length=2.5mm]}] (x2) -> (x4);
	\draw[-{Latex[length=2.5mm]}] (x3) -> (x5);
	\draw[-{Latex[length=2.5mm]}] (x4) -> (x6);
	\draw[-{Latex[length=2.5mm]}] (x5) -> (x7);
	\draw[-{Latex[length=2.5mm]}] (x6) -> (x8);
	\draw[-{Latex[length=2.5mm]}] (x7) -> (x9);
	\draw[-{Latex[length=2.5mm]}] (x8) -> (x9);
	\draw[-{Latex[length=2.5mm]}] (x1) -> (x9);
	\draw[-{Latex[length=2.5mm]}] (x2) -> (x9);
	\draw[-{Latex[length=2.5mm]}] (x3) -> (x9);
	\draw[-{Latex[length=2.5mm]}] (x4) -> (x9);
	\draw[-{Latex[length=2.5mm]}] (x5) -> (x9);
	\draw[-{Latex[length=2.5mm]}] (x6) -> (x9);
	\end{tikzpicture}
    \caption{Weak arrows. Treatment $X_8$, outcome $X_9$, condition $X_7$.}\label{fig:weak_arrows_pgm}
\end{subfigure}
\vspace{15pt}
\begin{subfigure}{\columnwidth}
    \centering
    \begin{tikzpicture}
	\node (x1) [draw, circle, fill=lightgray] at (-1, 1) {$X_1$};
	\node (x2) [draw, circle,fill=lightgray] at (1, 1) {$X_2$};
	\node (x3) [draw, circle,fill=lightgray] at (-1, 2.5) {$X_3$};
	\node (x4) [draw, circle,fill=lightgray] at (0, 2.5) {$X_4$};
	\node (x5) [draw, circle,fill=lightgray] at (1, 2.5) {$X_5$};
	\node (x6) [draw, circle,fill=lightgray] at (-3, 1.5) {$X_6$};
	\node (x7) [draw, circle,fill=lightgray] at (-3, .5) {$X_7$};
	\node (x8) [draw, circle,fill=lightgray] at (3, 1.5) {$X_8$};
	\node (x9) [draw, circle,fill=lightgray] at (3, .5) {$X_9$};
	\node (x10) [draw, circle,fill=lightgray] at (-1.5, -.66) {$X_{10}$};
	\node (x12) [draw, circle,fill=lightgray] at (0, -.66) {$X_{12}$};
	\node (x11) [draw, circle,fill=lightgray] at (1.5, -.66) {$X_{11}$};
	\draw[-{Latex[length=2.5mm]}] (x1) -> (x2);
	\draw[-{Latex[length=2.5mm]}] (x3) -> (x1);
	\draw[-{Latex[length=2.5mm]}] (x3) -> (x2);
	\draw[-{Latex[length=2.5mm]}] (x4) -> (x1);
	\draw[-{Latex[length=2.5mm]}] (x4) -> (x2);
	\draw[-{Latex[length=2.5mm]}] (x5) -> (x1);
	\draw[-{Latex[length=2.5mm]}] (x5) -> (x2);
	\draw[-{Latex[length=2.5mm]}] (x6) -> (x1);
	\draw[-{Latex[length=2.5mm]}] (x7) -> (x1);
	\draw[-{Latex[length=2.5mm]}] (x8) -> (x2);
	\draw[-{Latex[length=2.5mm]}] (x9) -> (x2);
	\draw[-{Latex[length=2.5mm]}] (x1) -> (x10);
	\draw[-{Latex[length=2.5mm]}] (x2) -> (x11);
	\draw[-{Latex[length=2.5mm]}] (x1) -> (x12);
	\draw[-{Latex[length=2.5mm]}] (x2) -> (x12);
	\end{tikzpicture}
    \caption{Mixed confounding}\label{fig:mixed_confounding_pgm}
\end{subfigure}
    \caption{CSuite graphs. Unless otherwise stated, we take $X_1$ as the treatment, $X_2$ as the outcome, and for CATE we take $X_3$ as the conditioning variable.}
    \label{fig:csuite_pgms}
\end{figure}

\section{Additional Results}
\label{sec:app:additional}

\subsection{Causal Discovery Results under Gaussian Exogenous Noise}

\Cref{fig:cd_summary} in the main text shows causal discovery results for the case where synthetic data was generated using non-Gaussian noise. In that case it was observed that using DECI together with a flexible noise model performed better than DECI with a Gaussian noise model. 
\Cref{fig:cd_summary_gauss} shows results for synthetic data generated using Gaussian noise. As expected, in this case using a Gaussian noise model is beneficial, although DECI with a spline noise mode still performs strongly.

\begin{figure}
    \centering
    \includegraphics[width=.7\linewidth, trim = {0 3.5cm 0 0}, clip]{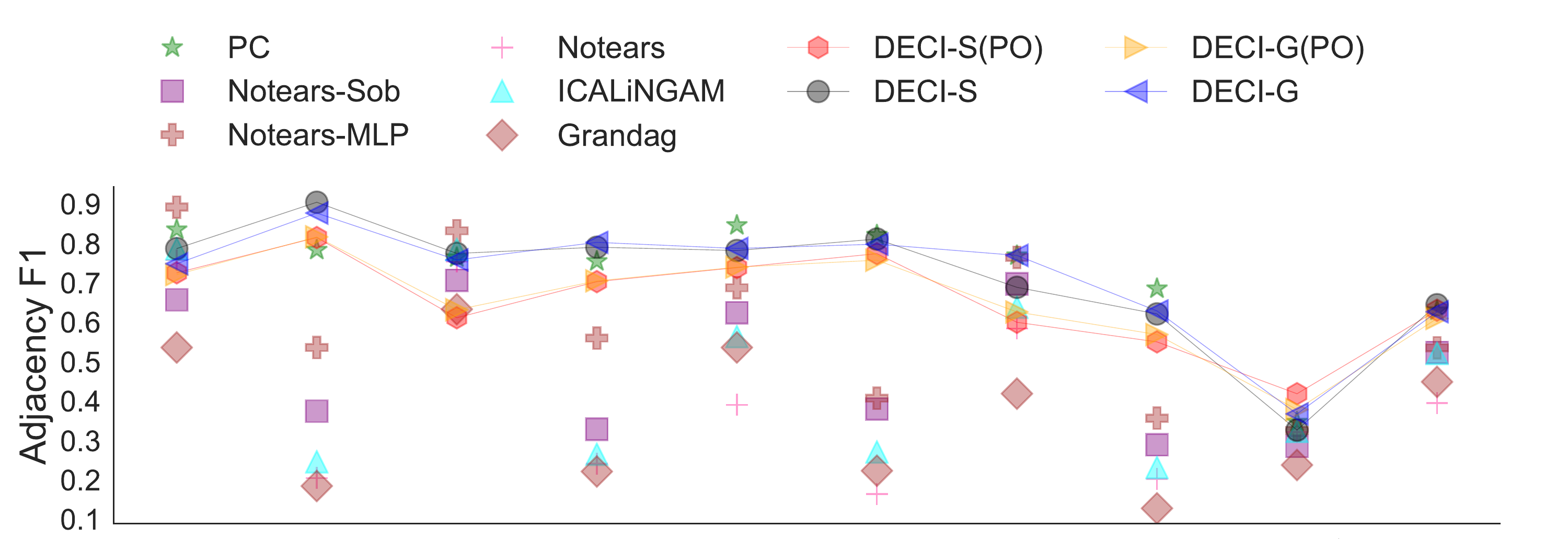}
    
    \includegraphics[width=.7\linewidth, trim = {0 3.5cm 0 2.95cm}, clip]{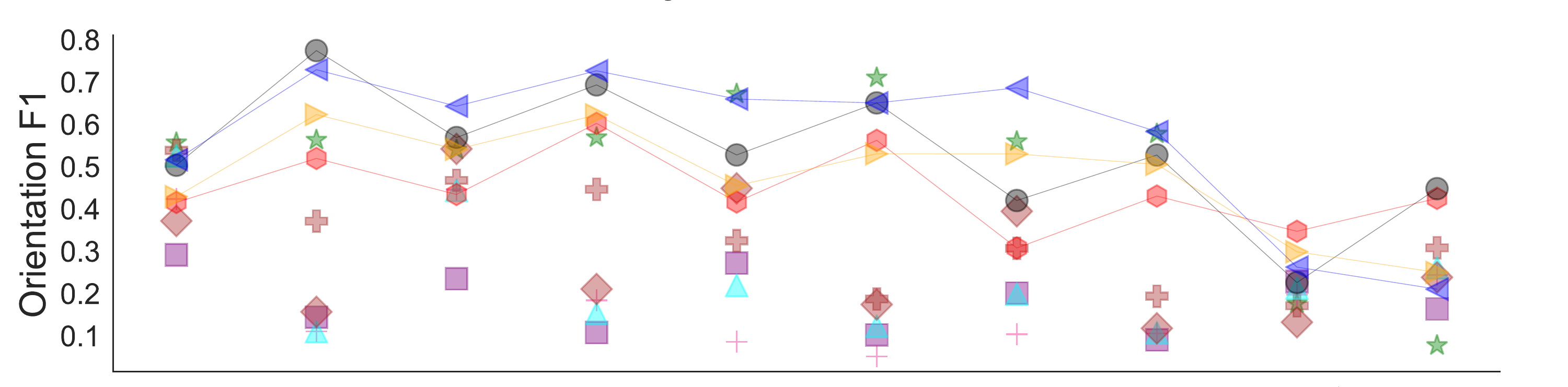}\
    
    \includegraphics[width=.7\linewidth, trim = {0 0 0 2.95cm}, clip]{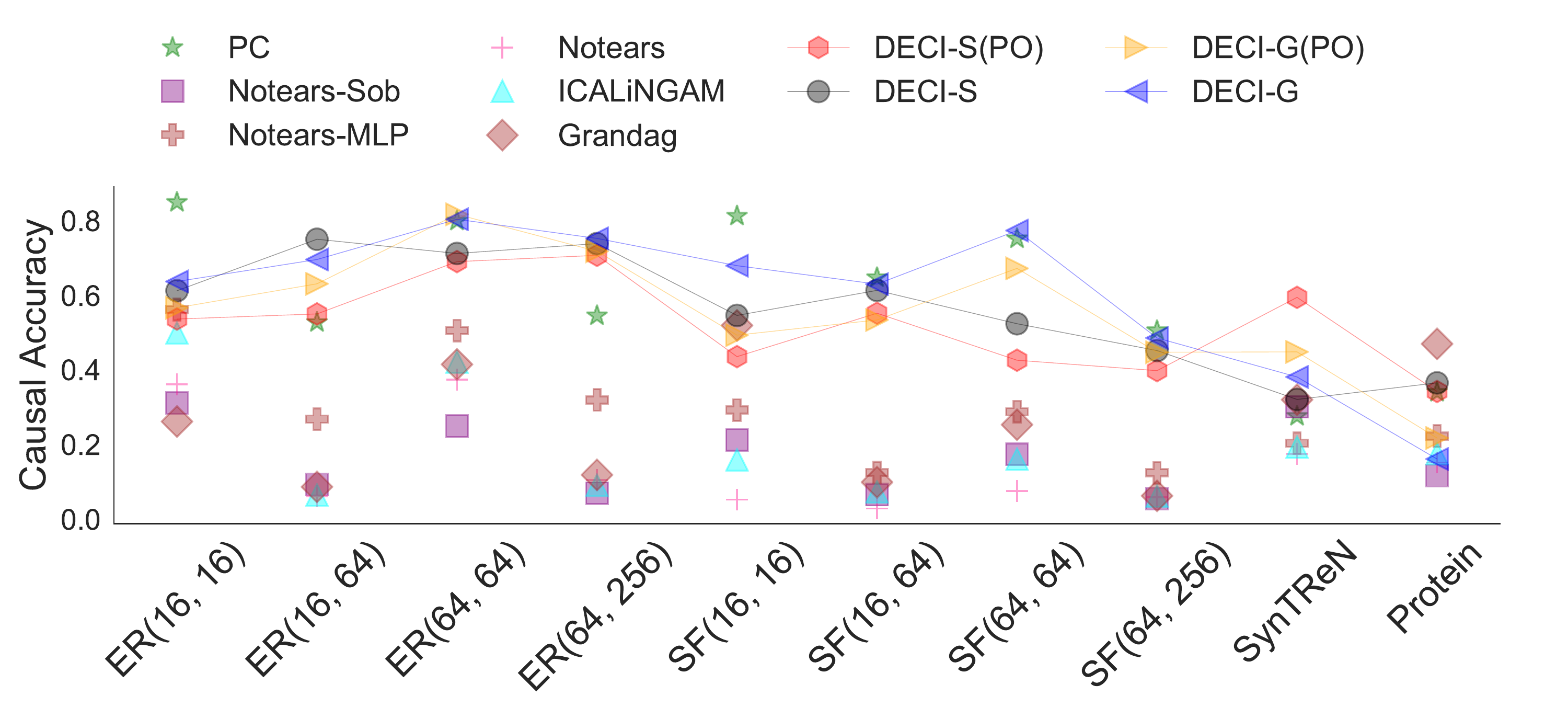}
    
    \caption{\textbf{DECI achieves better results than the baselines in all metrics shown.} The plots show the results for causal discovery for synthetic data generated using Gaussian noise. The legend ``DECI-G" and ``DECI-S" correspond to DECI using a Gaussian and spline noise model. Additionally, the ``(PO)'' corresponds to running DECI with 30\% of the training data missing completely at random. For readability, we highlight the DECI results by connecting them with soft lines. The figure shows mean results across five different random seeds.}
    \label{fig:cd_summary_gauss}
\end{figure}

\begin{figure}[p]
    \centering
    \includegraphics[width=0.95\linewidth]{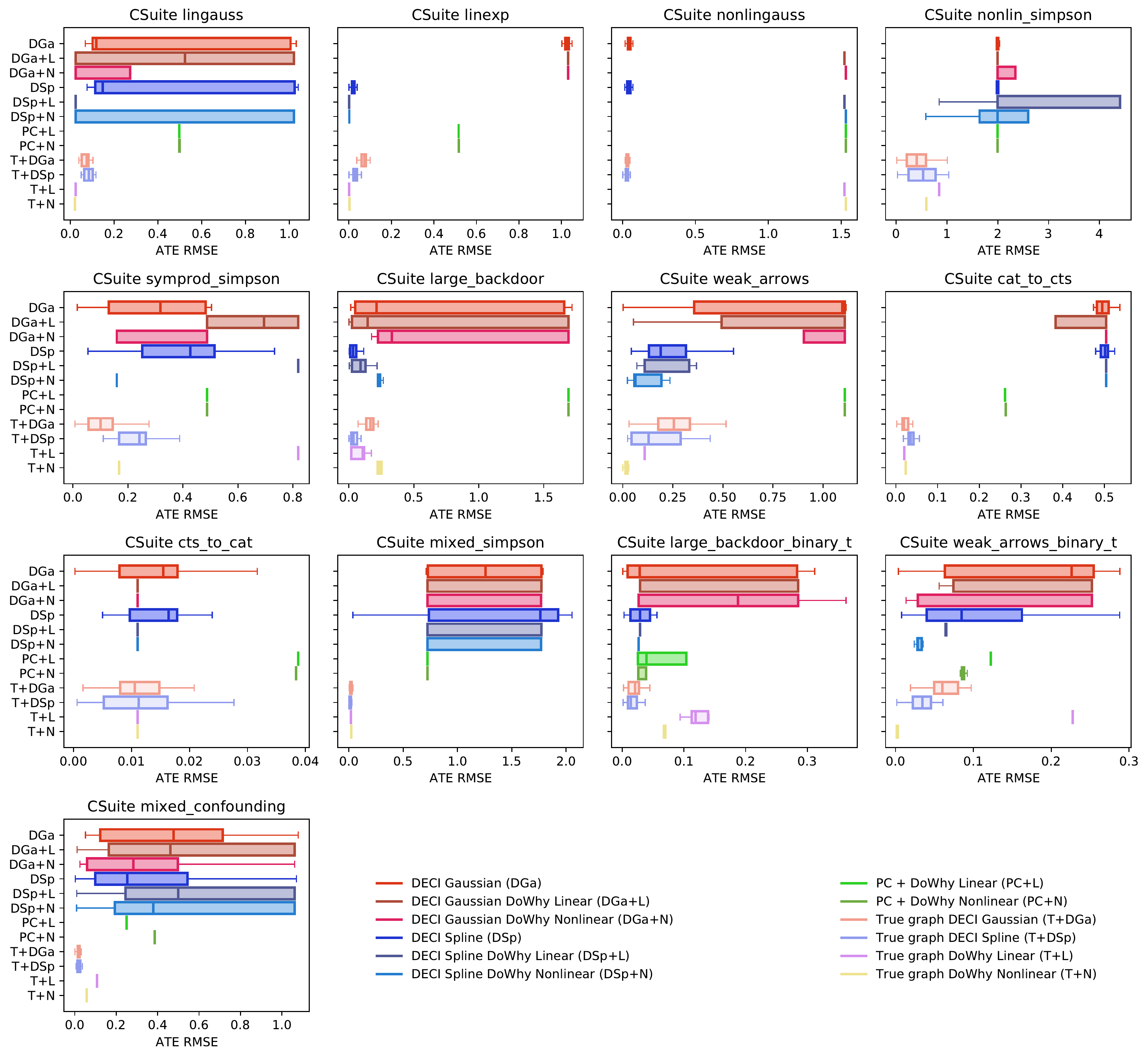}
    \caption{End-to-end ATE results on CSuite. }
    \label{fig:e2e_ate_rmse_csuite}
\end{figure}

\begin{figure}[p]
    \centering
    \includegraphics[width=0.9\linewidth]{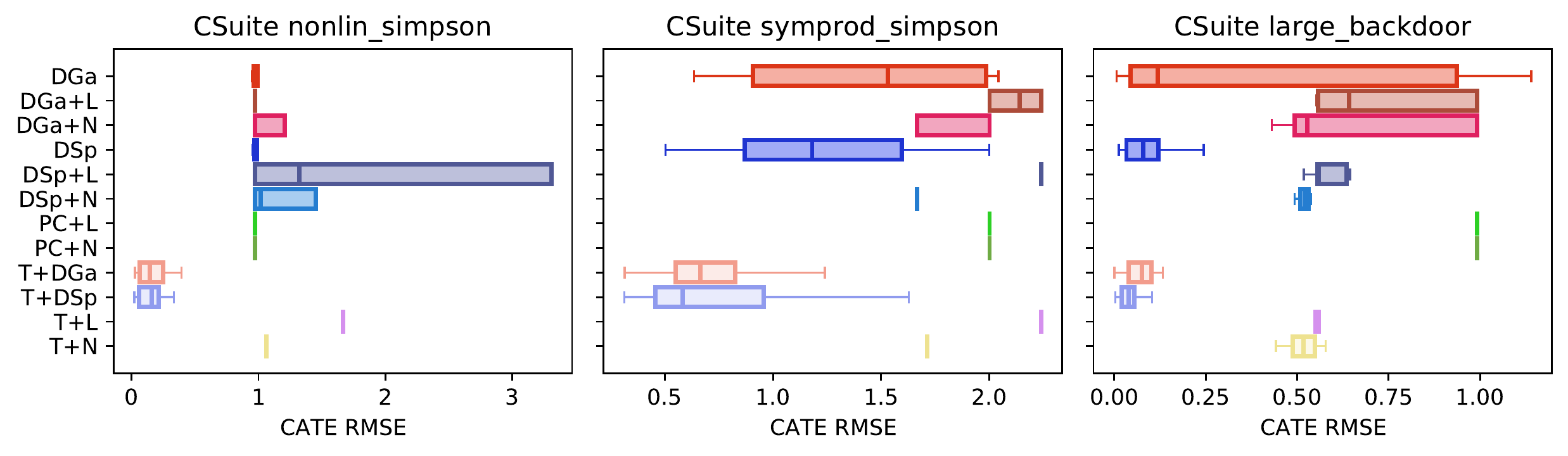}
    \caption{End-to-end CATE results on CSuite. Colours and acronyms as in Figure~\ref{fig:e2e_ate_rmse_csuite}.}
    \label{fig:e2e_cate_rmse_csuite}
\end{figure}

\begin{table}[t]
    \small
    \centering
    \resizebox{\columnwidth}{!}{%
    \begin{tabular}{lrrrrrrrrrrrr}
    \toprule
    Method & DGa & DGa+L & DGa+N & DSp & DSp+L & DSp+N & PC+L & PC+N & T+L & T+N & T+DGa & T+DSp \\
    Dataset                        &                 &                              &                                 &               &                            &                               &                 &                    &                         &                            &                            &                          \\
\midrule
ER(16, 16) - G                 &           1.280 &                        1.141 &                           1.129 &         1.648 &                      1.374 &                         1.393 &           1.491 &              1.475 &                   0.945 &                      0.936 &                      1.083 &                    1.332 \\
ER(16, 16) - S                 &           1.726 &                        1.776 &                           1.780 &         1.829 &                      1.776 &                         1.755 &           1.869 &              1.790 &                   1.755 &                      1.793 &                      1.643 &                    1.660 \\
ER(16, 64) - G                 &           1.699 &                        1.422 &                           1.334 &         1.501 &                      1.442 &                         1.202 &           1.335 &              1.440 &                   1.310 &                      1.369 &                      1.460 &                    1.644 \\
ER(16, 64) - S                 &           2.311 &                        2.465 &                           2.600 &         2.174 &                      2.452 &                         2.421 &           2.584 &              2.276 &                   2.742 &                      2.641 &                      2.510 &                    2.428 \\
ER(64, 64) - G                 &           1.208 &                        1.420 &                           1.190 &         1.287 &                      1.450 &                         1.397 &           1.325 &              1.273 &                   1.284 &                      1.250 &                      1.158 &                    1.124 \\
ER(64, 64) - S                 &           2.246 &                        1.626 &                           1.626 &         1.892 &                      2.325 &                         2.292 &           1.526 &              2.446 &                   2.442 &                      2.441 &                      2.490 &                    2.481 \\
SF(16, 16) - G                 &           1.156 &                        1.030 &                           1.409 &         1.699 &                      2.052 &                         1.343 &           1.574 &              1.233 &                   1.131 &                      1.078 &                      1.127 &                    1.375 \\
SF(16, 16) - S                 &           2.870 &                        1.805 &                           2.284 &         2.431 &                      2.363 &                         1.805 &           3.008 &              2.520 &                   2.518 &                      2.502 &                      2.424 &                    2.477 \\
SF(16, 64) - G                 &           1.702 &                          - &                             - &         1.539 &                        - &                         1.635 &           1.464 &              1.551 &                   1.510 &                      1.463 &                      1.559 &                    1.486 \\
SF(16, 64) - S                 &           3.594 &                          - &                             - &         4.139 &                        - &                           - &           3.877 &              4.145 &                   4.162 &                      4.106 &                      4.161 &                    3.861 \\
SF(64, 64) - G                 &           1.049 &                        0.998 &                           1.134 &         1.010 &                      1.035 &                         1.309 &           0.972 &              1.343 &                   1.006 &                        - &                      1.087 &                    1.288 \\
SF(64, 64) - S                 &           2.754 &                        3.239 &                           3.239 &         3.242 &                      3.239 &                         3.239 &           3.227 &              2.591 &                   2.815 &                        - &                      2.883 &                    3.034 \\
csuite\_cat\_to\_cts              &           0.495 &                        0.504 &                           0.504 &         0.501 &                      0.504 &                         0.504 &           0.262 &              0.264 &                   0.020 &                      0.023 &                      0.019 &                    0.036 \\
csuite\_cts\_to\_cat              &           0.015 &                        0.011 &                           0.011 &         0.016 &                      0.011 &                         0.011 &           0.039 &              0.038 &                   0.011 &                      0.011 &                      0.011 &                    0.011 \\
csuite\_large\_backdoor          &           0.213 &                        0.144 &                           0.331 &         0.031 &                      0.091 &                         0.232 &           1.690 &              1.690 &                   0.105 &                      0.241 &                      0.167 &                    0.035 \\
csuite\_large\_backdoor\_bt &           0.028 &                        0.029 &                           0.187 &         0.029 &                      0.029 &                         0.027 &           0.039 &              0.039 &                   0.119 &                      0.070 &                      0.021 &                    0.014 \\
csuite\_linexp                  &           1.029 &                        1.031 &                           1.031 &         0.022 &                      0.001 &                         0.002 &           0.516 &              0.517 &                   0.001 &                      0.003 &                      0.073 &                    0.028 \\
csuite\_lingauss                &           0.120 &                        0.523 &                           0.024 &         0.149 &                      0.025 &                         0.024 &           0.498 &              0.498 &                   0.025 &                      0.022 &                      0.076 &                    0.085 \\
csuite\_mixed\_confounding       &           0.477 &                        0.461 &                           0.282 &         0.254 &                      0.500 &                         0.380 &           0.250 &              0.387 &                   0.107 &                      0.057 &                      0.019 &                    0.018 \\
csuite\_mixed\_simpson           &           1.259 &                        0.723 &                           0.723 &         1.765 &                      1.772 &                         1.771 &           0.723 &              0.723 &                   0.017 &                      0.022 &                      0.014 &                    0.013 \\
csuite\_nonlin\_simpson          &           1.995 &                        1.994 &                           1.994 &         1.997 &                      1.994 &                         1.994 &           1.994 &              1.994 &                   0.848 &                      0.597 &                      0.404 &                    0.531 \\
csuite\_nonlingauss             &           0.042 &                        1.522 &                           1.532 &         0.043 &                      1.522 &                         1.532 &           1.532 &              1.532 &                   1.522 &                      1.532 &                      0.034 &                    0.034 \\
csuite\_symprod\_simpson         &           0.318 &                        0.695 &                           0.487 &         0.427 &                      0.819 &                         0.160 &           0.487 &              0.487 &                   0.819 &                      0.168 &                      0.101 &                    0.242 \\
csuite\_weak\_arrows             &           1.097 &                        1.108 &                           1.108 &         0.189 &                      0.110 &                         0.064 &           1.108 &              1.108 &                   0.109 &                      0.015 &                      0.255 &                    0.128 \\
csuite\_weak\_arrows\_bt    &           0.226 &                        0.252 &                           0.252 &         0.085 &                      0.065 &                         0.029 &           0.123 &              0.086 &                   0.228 &                      0.003 &                      0.060 &                    0.034 \\
IHDP                           &           0.187 &                        0.187 &                           0.187 &         0.090 &                      0.101 &                         0.116 &           0.187 &              0.187 &                   0.187 &                      0.187 &                      0.146 &                    0.087 \\
Twins                          &           0.030 &                        0.025 &                           0.025 &         0.022 &                      0.025 &                         0.025 &           0.068 &              0.025 &                   0.022 &                      0.042 &                      0.022 &                    0.060 \\
\bottomrule
\end{tabular}
}
    \vspace{2pt}
    \caption{Median ATE RMSE data underling our rank table. The median is taken across multiple seeds, with the number of seeds shown in Table~\ref{tab:seeds}. Standard deviations are also shown in Table~\ref{tab:stdev}. Missing values indicate that the method exceeded the computational budget---this typically occurred for larger graphs.}
    \label{tab:ate_rmse}
\end{table}

\begin{table}[t]
    \small
    \centering
    \resizebox{\columnwidth}{!}{%
    \begin{tabular}{lrrrrrrrrrrrr}
    \toprule
    Method & DGa & DGa+L & DGa+N & DSp & DSp+L & DSp+N & PC+L & PC+N & T+L & T+N & T+DGa & T+DSp \\
    Dataset                        &                 &                              &                                 &               &                            &                               &                 &                    &                         &                            &                            &                          \\
\midrule
ER(16, 16) - G                 &               5 &                            5 &                               5 &             5 &                          5 &                             5 &               5 &                  5 &                       5 &                          5 &                          5 &                        5 \\
ER(16, 16) - S                 &               5 &                            5 &                               5 &             5 &                          5 &                             5 &               5 &                  5 &                       5 &                          5 &                          5 &                        5 \\
ER(16, 64) - G                 &               5 &                            5 &                               5 &             5 &                          5 &                             5 &               5 &                  5 &                       5 &                          5 &                          5 &                        5 \\
ER(16, 64) - S                 &               5 &                            5 &                               4 &             5 &                          3 &                             5 &               5 &                  5 &                       5 &                          5 &                          5 &                        5 \\
ER(64, 64) - G                 &               5 &                            1 &                               1 &             5 &                          1 &                             1 &               4 &                  5 &                       5 &                          5 &                          5 &                        5 \\
ER(64, 64) - S                 &               5 &                            1 &                               1 &             5 &                          2 &                             2 &               1 &                  5 &                       5 &                          5 &                          5 &                        5 \\
SF(16, 16) - G                 &               5 &                            2 &                               3 &             5 &                          4 &                             5 &               4 &                  5 &                       5 &                          5 &                          5 &                        5 \\
SF(16, 16) - S                 &               5 &                            1 &                               2 &             5 &                          2 &                             1 &               3 &                  5 &                       5 &                          5 &                          5 &                        5 \\
SF(16, 64) - G                 &               5 &                            0 &                               0 &             5 &                          0 &                             1 &               4 &                  5 &                       5 &                          5 &                          5 &                        5 \\
SF(16, 64) - S                 &               5 &                            0 &                               0 &             5 &                          0 &                             0 &               4 &                  5 &                       5 &                          5 &                          5 &                        5 \\
SF(64, 64) - G                 &               5 &                            4 &                               3 &             5 &                          3 &                             2 &               3 &                  4 &                       5 &                          0 &                          5 &                        5 \\
SF(64, 64) - S                 &               5 &                            5 &                               5 &             5 &                          5 &                             5 &               3 &                  4 &                       5 &                          0 &                          5 &                        5 \\
csuite\_cat\_to\_cts              &              20 &                           20 &                              20 &            20 &                         20 &                            20 &              20 &                 20 &                      20 &                         20 &                         20 &                       20 \\
csuite\_cts\_to\_cat              &              20 &                           20 &                              20 &            20 &                         20 &                            20 &              20 &                 20 &                      20 &                         20 &                         20 &                       20 \\
csuite\_large\_backdoor          &              20 &                           20 &                              20 &            20 &                         20 &                            20 &              20 &                 20 &                      20 &                         20 &                         20 &                       20 \\
csuite\_large\_backdoor\_bt &              20 &                           20 &                              20 &            20 &                         20 &                            20 &              20 &                 20 &                      20 &                         20 &                         20 &                       20 \\
csuite\_linexp                  &              20 &                           20 &                              20 &            20 &                         20 &                            20 &              20 &                 20 &                      20 &                         20 &                         20 &                       20 \\
csuite\_lingauss                &              20 &                           20 &                              20 &            20 &                         20 &                            20 &              20 &                 20 &                      20 &                         20 &                         20 &                       20 \\
csuite\_mixed\_confounding       &              20 &                           20 &                              20 &            20 &                         20 &                            20 &              20 &                 20 &                      20 &                         20 &                         20 &                       20 \\
csuite\_mixed\_simpson           &              20 &                           20 &                              20 &            20 &                         20 &                            20 &              20 &                 20 &                      20 &                         20 &                         20 &                       20 \\
csuite\_nonlin\_simpson          &              20 &                           20 &                              20 &            20 &                         20 &                            20 &              20 &                 20 &                      20 &                         20 &                         20 &                       20 \\
csuite\_nonlingauss             &              20 &                           20 &                              20 &            20 &                         20 &                            20 &              20 &                 20 &                      20 &                         20 &                         20 &                       20 \\
csuite\_symprod\_simpson         &              20 &                           20 &                              20 &            20 &                         20 &                            20 &              20 &                 20 &                      20 &                         20 &                         20 &                       20 \\
csuite\_weak\_arrows             &              20 &                           20 &                              20 &            20 &                         20 &                            20 &              20 &                 20 &                      20 &                         20 &                         20 &                       20 \\
csuite\_weak\_arrows\_bt    &              20 &                           20 &                              20 &            20 &                         20 &                            20 &              20 &                 20 &                      20 &                         20 &                         20 &                       20 \\
IHDP                           &               5 &                            5 &                               5 &             5 &                          5 &                             5 &               5 &                  5 &                       5 &                          5 &                          5 &                        5 \\
Twins                          &               5 &                            5 &                               5 &             5 &                          5 &                             5 &               5 &                  5 &                       5 &                          5 &                          5 &                        5 \\
    \bottomrule
    \end{tabular}
    }
    \vspace{2pt}
    \caption{Number of seeds run when computing values in Table~\ref{tab:ate_rmse}. For ER/SF graphs, where fewer than 5 seeds were used, this indicates that some runs exceeded the computational budget.}
    \label{tab:seeds}
\end{table}

\begin{table}[t]
    \small
    \centering
    \resizebox{\columnwidth}{!}{%
    \begin{tabular}{lrrrrrrrrrrrr}
    \toprule
    Method & DGa & DGa+L & DGa+N & DSp & DSp+L & DSp+N & PC+L & PC+N & T+L & T+N & T+DGa & T+DSp \\
    Dataset                        &                 &                              &                                 &               &                            &                               &                 &                    &                         &                            &                            &                          \\
\midrule
csuite\_cat\_to\_cts              &         0.107 &                      0.210 &                         0.108 &       0.105 &                    0.194 &                       0.172 &           0.000 &              0.000 &                    0.011 &                  0.011 &                   0.000 &                      0.000 \\
csuite\_cts\_to\_cat              &         0.007 &                      0.000 &                         0.000 &       0.007 &                    0.000 &                       0.000 &           0.000 &              0.000 &                    0.009 &                  0.009 &                   0.000 &                      0.000 \\
csuite\_large\_backdoor          &         0.724 &                      0.737 &                         0.724 &       0.041 &                    0.062 &                       0.022 &           0.000 &              0.000 &                    0.046 &                  0.028 &                   0.055 &                      0.031 \\
csuite\_large\_backdoor\_bt &         0.134 &                      0.125 &                         0.124 &       0.081 &                    0.077 &                       0.056 &           0.033 &              0.006 &                    0.011 &                  0.010 &                   0.037 &                      0.004 \\
csuite\_linexp                  &         0.013 &                      0.000 &                         0.000 &       0.014 &                    0.000 &                       0.000 &           0.000 &              0.000 &                    0.015 &                  0.015 &                   0.000 &                      0.000 \\
csuite\_lingauss                &         0.435 &                      0.498 &                         0.432 &       0.454 &                    0.355 &                       0.489 &           0.000 &              0.001 &                    0.020 &                  0.022 &                   0.000 &                      0.000 \\
csuite\_mixed\_confounding       &         0.371 &                      0.389 &                         0.353 &       0.369 &                    0.418 &                       0.422 &           0.000 &              0.000 &                    0.010 &                  0.010 &                   0.000 &                      0.000 \\
csuite\_mixed\_simpson           &         0.524 &                      0.522 &                         0.507 &       0.585 &                    0.522 &                       0.514 &           0.000 &              0.000 &                    0.010 &                  0.008 &                   0.000 &                      0.000 \\
csuite\_nonlin\_simpson          &         1.071 &                      0.593 &                         0.976 &       1.080 &                    1.326 &                       1.409 &           0.000 &              0.000 &                    0.283 &                  0.304 &                   0.000 &                      0.000 \\
csuite\_nonlingauss             &         0.014 &                      0.000 &                         0.000 &       0.016 &                    0.000 &                       0.000 &           0.000 &              0.000 &                    0.013 &                  0.016 &                   0.000 &                      0.000 \\
csuite\_symprod\_simpson         &         0.187 &                      0.162 &                         0.162 &       0.180 &                    0.000 &                       0.128 &           0.000 &              0.000 &                    0.090 &                  0.124 &                   0.000 &                      0.000 \\
csuite\_weak\_arrows             &         0.433 &                      0.401 &                         0.426 &       0.256 &                    0.118 &                       0.075 &           0.000 &              0.000 &                    0.160 &                  0.131 &                   0.000 &                      0.008 \\
csuite\_weak\_arrows\_bt    &         0.104 &                      0.118 &                         0.113 &       0.093 &                    0.067 &                       0.067 &           0.000 &              0.002 &                    0.020 &                  0.017 &                   0.000 &                      0.001 \\
IHDP                           &         0.021 &                      0.000 &                         0.062 &       0.013 &                    0.048 &                       0.037 &           0.024 &              0.024 &                    0.014 &                  0.022 &                   0.000 &                      0.000 \\
Twins                          &         0.018 &                      0.000 &                         0.000 &       0.004 &                    0.022 &                       0.019 &           0.023 &              0.024 &                    0.003 &                  0.009 &                   0.000 &                      0.000 \\
    \bottomrule
    \end{tabular}
    }
    \vspace{2pt}
    \caption{Standard deviations for ATE RMSE results.
    }
    \label{tab:stdev}
\end{table}

\subsection{Summary of CSuite Results}

Comprehensive results on CSuite ATE and CATE performance are shown in figures~\ref{fig:e2e_ate_rmse_csuite} and~\ref{fig:e2e_cate_rmse_csuite}.
We first provide a summary of results here and then go into per-dataset analysis in the following subsection. 

We find DECI to perform consistently well in our 2 node datasets. It learns a uniform posterior over graphs in the non-identifiable setting, it fits non-linear functions well and it is robust to heavy tailed noise when employing the spline noise model. We find linear and non-linear DML inference to also perform acceptably, with the exception of the heavy tailed noise case, where the methods overfit to outliers and thus estimate ATE poorly.
    
    On the larger (4 and 12 node) datasets, when the true graph is available, DECI provides ATE estimates competitive with the well-established non-linear DML method. Notably, DECI outperforms backdoor adjustment methods when the number of possible adjustment sets is large. Choosing the optimal adjustment set is an np-hard problem and the most common approach is to simply choose the largest one. This leads to doWhy suffering from variance. DECI's simulation-based approach avoids having to choose an adjustment set. On the other hand, for densely connected graphs where the strength of the connection between nodes is low, DECI struggles to capture the funcitonal relationships in the data and DML is most competitive. For CATE estimation DECI provides superior performance in all datasets and is able to completely solve all tasks but one.

  When the graph is learnt from the data, the non-linear nature of our (4 and 12 node) datasets together with their heavy tailed noise make the discovery problem very challenging. We find that the PC algorithm provides very poor results or fails to find any causal DAGs compatible with the data when working with these datasets. We find both DECI to provide more acceptable performance with the DECI-spline variant producing more reliable results.
In this learnt graph setting, causal inference performance deteriorates sharply as a consequence of imperfect causal discovery.  
However, our findings in terms of relative performance among inference methods stay the same.

\subsection{Discussion of Continuous CSuite Results}
\begin{enumerate}
    \item \textbf{lingauss}: When the true graph is available, all our causal inference methods are able to solve this problem. However, when the graph needs to be identified from the data, causal discovery accuracy is around 50\%. DECI discovery converges to a posterior with half of its mass on the right distribution resulting in DECI inference methods showing the lowest error.
    \item \textbf{linexp}: The non-Gaussian noise causes difficulties for DECI-Gaussian, which identifies the wrong orientation in a majority of cases. As a result, inference algorithm yield poor results. 
    Surprisingly, the PC algorithm is also unable to identify the causal graph, leading to overall poor inference performance. 
    With the spline noise model, DECI successfully identifies the causal graph, allowing for all inference algorithms to solve the problem.
    \item \textbf{nonlingauss}: The non-linear relationship between variables leads all DECI discovery runs to successfully recover the edge direction for this dataset while PC consistently identifies the wrong edge direction. As expected, linear ATE estimation performs poorly on this task. However, we find DoWhy non-linear to not fare much better, likely this is because DML still assumes a linear relationship between treatment and target.
    DECI solves the task successfully.
    
    \item \textbf{nonlin\_simpson}: Even when the true graph is available, none of our inference methods are able to recover the true ATE on this more difficult task. We observe non-linear methods (DECI and DoWhy-nonlinear) to perform similarly to each other and more strongly than the simple linear adjustment. For the CATE task, the true value is close to 0. This is correctly identified by both DECI-Gaussian and DECI-Spline. Interestingly, we find both linear and non-linear DoWhy variants to overestimate the causal effect when using the backdoor criterion. We attribute this to DECI solving a lower dimensional problem when estimating CATE. While DECI simply regresses the conditioning variable onto the effect variable. The backdoor adjustment employed by DoWhy requires regression from the joint space of conditioning variables and confounders onto the effect variables. The latter procedure involves estimating the relative strength of confounders and conditioning variables, which is a more challenging task.
    
    This dataset provides a challenging causal discovery task. DECI identifies the correct edges with probability ~0.9. It capacity to recover the edge orientation is slightly worse ~0.65. This imperfect causal discovery leads to poor inference for all methods. (potentially because they get Simpson's paradox the wrong way around).
    \item \textbf{symprod\_simpson}: Even with access to the true graph, no inference method is able to solve this problem. However, we find non-linear methods to clearly outperform linear adjustment for both CATE and ATE estimation. Among non-linear methods, performance is similar for ATE estimation, with DECI-Gaussian performing slightly better than DoWhy-nonlinear and DECI-Spline slightly worse. However, when estimating CATE, DECI inference present an error twice as low as nonlinear DoWhy. Again, we attribute this to the backdoor adjustment employed by DoWhy being a more challenging inference task than the 1d regression on simulated data employed by DECI.
    
    In terms of causal discovery, results are similar to nonlin-simpson with PC failing completely and DECI obtaining an adjacency score of ~0.92 and orientation of ~0.7. The imperfect graph knowledge hurts causal inference. Again we see the non-linear backdoor adjustment to perform similarly to DECI for ATE estimation while DECI shows decisively stronger performance when estimating CATE. As expected, the linear adjustment method fares poorly in this strongly non-linear setting.
    
    \item \textbf{weak\_arrows}: When the true causal DAG is available we find that both DoWhy methods solve this ATE problem while both DECI methods predict slightly suboptimal ATE values. 
    
    In terms of causal discovery, DECI clearly outperforms PC with the spline noise model again proving more reliable and leading to better ATE estimates. Although no methods are able to solve the task, we find that non-linear DoWhy with the DECI-spline graphs performs best. We hypothesize that the amortised function structure employed by DECI suffers in very densely connected graphs with weak edges, like is the case here.
    
    \item \textbf{large\_backdoor}: With access to the true graph, DECI methods outperform both Dowhy variants for both ATE and CATE estimation. DECI-spline performs best and is able to solve both problems. When faced with many confounders, adjustment procedures suffer from large variance. As a result, 
    despite the non-linearity of the functional relationships at play, the simpler linear backdoor adjustment outperforms the non-linear DML approach.  On the other hand, DECI's simulation based approach is not disadvantaged in this setting.
    
    Following the trend of the previous datasets, PC performs poorly in terms of causal discovery, biasing downstream inference methods which perform poorly in terms of ATE and CATE estimation. DECI discovery is more reliable, an effect most noticeable when using the spline noise models. With the DECI-Spline posterior over graphs, both DECI-spline and linear DoWhy are able to solve the ATE problem and DECI-spline is the only method capable of solving the CATE task. For both tasks and noise models DECI outperforms non-linear DoWhy, again showing its invariance to the size of potential adjustment set. 

\end{enumerate}

\subsection{Synthetic Graph Experiments}

\begin{figure}[p]
    \centering
    \includegraphics[width=0.95\linewidth]{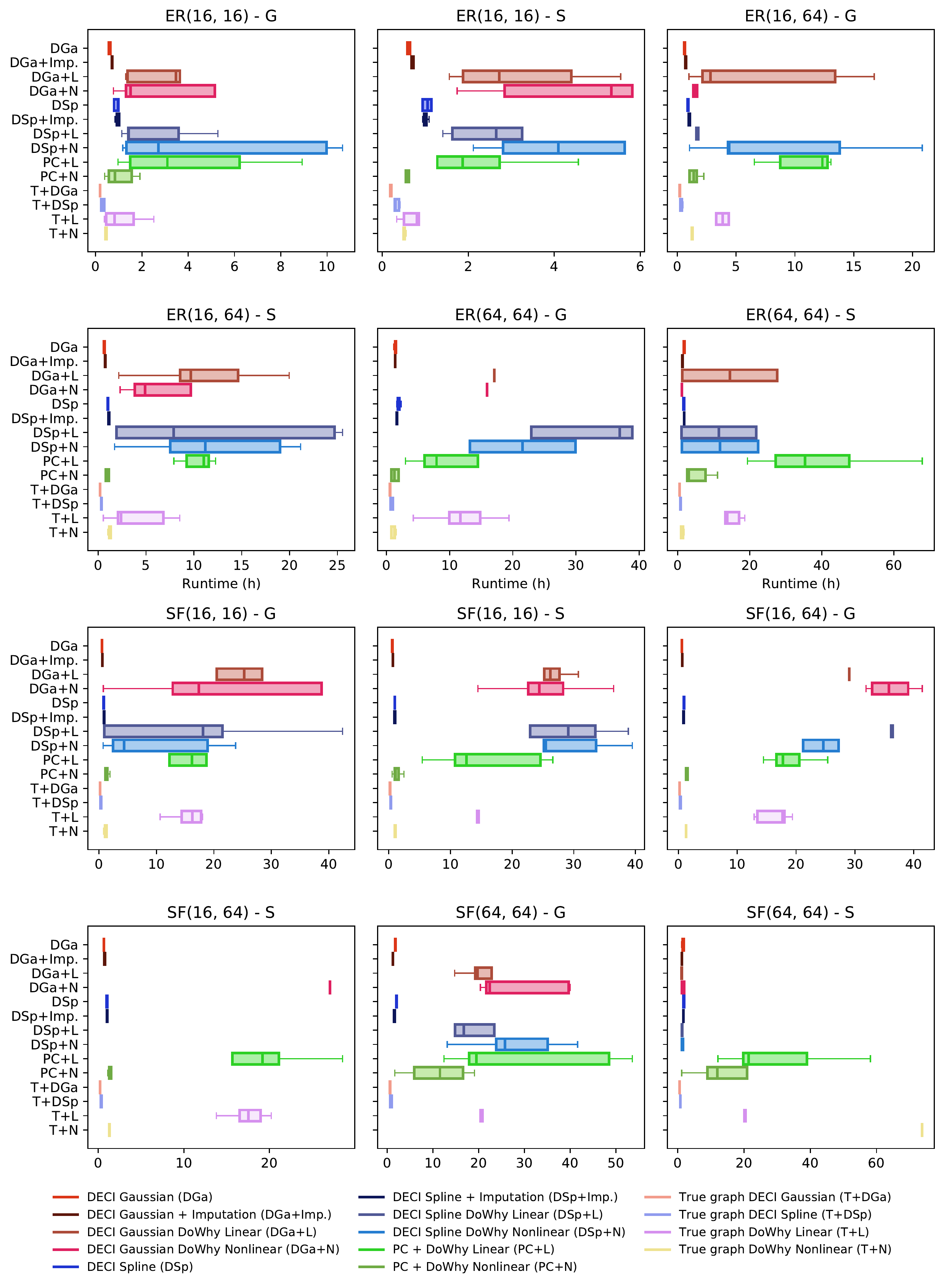}
    \caption{Runtime of End-to-end ATE estimation methods on synthetic graphs. }
    \label{fig:e2e_runtime_rmse_ersf}
\end{figure}

We test the performance of DECI on ATE estimation with random graphs as described in \cref{sec:CD_exp}. For each graph, we randomly generate interventional data for up to five random interventions. We chose the effect variable as the last variable in the causal order that has not yet been used for data generation. For each effect variable we chose the intervention by randomly traversing the graph up to three edges away from the effect variable.

\Cref{tab:ate_rmse} shows the performance of the ATE estimation of DECI and all baselines on the synthetic graph data. We only show results for methods that have a runtime of less than one day. 
\Cref{fig:e2e_runtime_rmse_ersf} shows the runtimes for the different methods. DECI has consistently the lowest runtime and scales best to larger graphs. While the runtime of DECI stays approximately constant for various graphs, the runtime of the ATE estimation baselines increases with more complex graphs.
In general, the methods using the true graph outperform the methods that also perform causal discovery. Further, no method strongly outperforms all other methods with DECI being a strong competitor to the already established DML methods. Lastly, we can see that DECI is capable of performing causal discovery, data imputation and ATE estimation in an end-to-end fashion without degrading performance.

\subsection{Learning in Non-identifiable Settings with the Help of Graph Priors}

We investigate the utility of prior knowledge over causal graphs for causal discovery and end2end inference in non-identifiable and difficult to identify settings. Specifically, we generate 2 datasets composed of 2000 training examples each. The first is composed of only linear relationships between variables and Gaussian additive noise, making the causal graph non-identifiable. The second dataset also uses linear functions but has a mix of exponential and Tanh-Gaussian noise. Although identifiable, discovery in this latter setting is challenging.

We introduce prior knowledge about graph sparseness through
the weighted adjacency matrix $W_{0} \in [0,1]^{D \times D}$, with zero entries encouraging sparser graphs. The resulting informed DECI prior is
\begin{equation*}
    p(G) \propto \exp \left(-\lambda_s \Vert G - W_{0}\Vert_F^{2} - \rho \, h(G)^2 - \alpha \, h(G)\right), 
\end{equation*}
with the scalar $\lambda_{s}$ regulating the strength of the prior beliefs encoded in $W_{0}$.

We compare DECI inference with access to the true graph to end2end DECI inference. In the latter case we consider a PC prior, which has as its mean the CP-DAG provided by PC. We consider different prior strengths, i.e. the value of the entries of $W_{0}$, between 0 and 1. We also experiment with introducing the true-graph as a prior of this form, yielding what we refer to as the ``informed prior''.

In the non-identifiable case, we find both DECI (prior strength 0) and PC discovery to provide incorrect graphs. Interestingly, providing the PC CPDAG as a prior for DECI can yield large gains in terms of causal discovery due to a variance reduction effect. These gains do not translate to better ATE estimation, where performance is not improved over the uninformative prior. Providing knowledge of the true graph does help causal inference, with a more confident prior yielding better results.

In the difficult identifiable case, the PC prior does provide gains to DECI. We find the optimal prior strength to be 0.5: a balanced combination of PC and DECI discovery is most reliable, while using exclusively one of the two algorithms yields worse results. In this identifiable setting informed DECI discovery is able to obtain perfect ATE estimation performance with a prior strength as low as 0.2.

\begin{figure*}[htb]
    \centering
    \includegraphics[width=\linewidth]{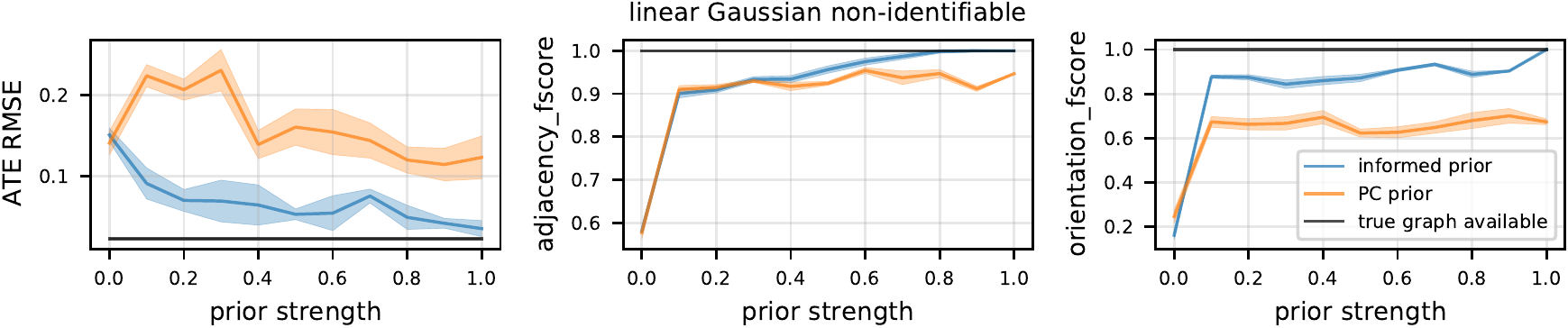}
    \caption{Causal discovery and inference results obtained on a 9 node linear Gaussian dataset, where the graph is non-identifiable without prior knowledge. We perform DECI inference with the true graph and DECI end2end inference with different priors. Informed prior refers to using a smoothed version of the true graph as the prior. PC prior refers to using the CP-DAG outputted by PC as the prior mean $W_{0}$. The prior strength indicates how much prior mass is placed on the mean prior graph $W_{0}$ and how much is spread across all other DAGs.}
    \label{fig:priors_non_identifiable}
\end{figure*}

\begin{figure*}[htb]
    \centering
    \includegraphics[width=\linewidth]{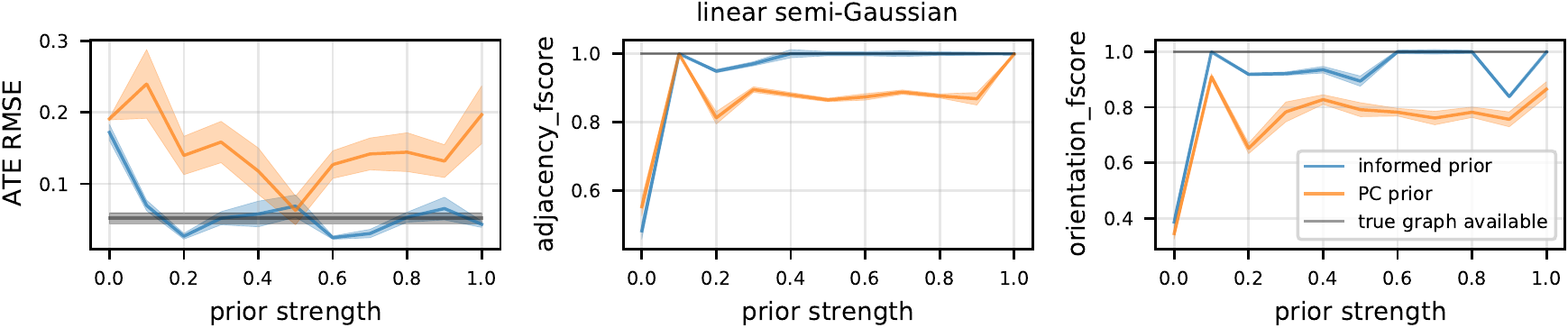}
    \caption{Causal discovery and inference results obtained on a 9 node linear non-Gaussian dataset, difficult to identify without prior knowledge. We perform DECI inference with the true graph and DECI end2end inference with different priors. Informed prior refers to using a smoothed version of the true graph as the prior. PC prior refers to using the CP-DAG outputted by PC as the prior mean $W_{0}$. The prior strength indicates how much prior mass is placed on the mean prior graph $W_{0}$ and how much is spread across all other DAGs.}
    \label{fig:priors_non_identifiable2}
\end{figure*}

\end{document}